\newcommand{\X}{\boldsymbol{X}}
\newcommand{\x}{\mathbf{x}}
\newcommand{\y}{\mathbf{y}}
\newcommand{\cov}{\text{Cov}}
\newcommand{\un}{1\!\!1}
\newcommand{\bulle}{\textcolor[rgb]{0.00,0.00,1.00}{\bullet}}
\newtheorem{theorem}{Theorem}
\newtheorem{proposition}[theorem]{Proposition}
\newtheorem{definition}{Definition}
\theoremstyle{plain}
\theoremstyle{definition}
\theoremstyle{remark}
\newcommand{\BibTeX}{B\kern-.05em{\sc i\kern-.025em b}\kern-.08em\TeX}
\begin{document}


\begin{frontmatter}


\paperid{123} 


\title{KNN and K-means in Gini Prametric Spaces}


\author[A]{\fnms{Cassandra}~\snm{Mussard}\orcid{0009-0008-8679-0088}\footnote{Equal contribution.}}
\author[B]{\fnms{Arthur}~\snm{Charpentier}\orcid{0000-0003-3654-6286}\footnotemark}
\author[C]{\fnms{Stéphane}~\snm{Mussard}\orcid{0000-0003-2331-630X}\thanks{Corresponding Author. Email: stephane.mussard@unimes.fr}\footnotemark} 

\address[A]{CNES}
\address[B]{UQAM}
\address[C]{Univ. Nîmes CHROME and UMP6 AIRESS}


\begin{abstract}
This paper introduces innovative enhancements to the K-means and K-nearest neighbors (KNN) algorithms based on the concept of Gini prametric spaces (as opposed to metric spaces with standard distance properties). Unlike traditional distance metrics, Gini prametrics incorporate both value-based and rank-based measures, offering robustness to noise and outliers. The main contributions of the paper include (1) proposing a Gini prametric that captures rank information alongside value distances, (2) presenting a Gini K-means algorithm that is proven to converge and demonstrates resilience to noisy data, and (3) introducing a Gini KNN method that rivals state-of-the-art approaches like Hassanat’s distance in noisy environments. Experimental evaluations on 16 datasets from the UCI repository reveal the superior performance and efficiency of Gini-based algorithms in clustering and classification tasks. This work opens new directions for rank-based prametrics in machine learning and statistical analysis. 
\end{abstract}

\end{frontmatter}


\section{Introduction}

Machine learning, data analysis, and pattern recognition are closely related fields, each encompassing a variety of tasks and approaches. Among the most common tasks are supervised classifications, where models are trained using labeled data, and unsupervised learning, such as clustering, which deals with finding patterns in unlabeled data. In each field, numerous algorithms are available based on different metrics, however some distance metrics are used in both contexts, classification and clustering. For instance, K-nearest neighbors or KNN (\citet{Fix1951,Cover1967}) and K-means algorithms (\citet{Steinhaus1957,Lloyd1982, Ball1965,Macqueen1967}) were, at the moment of their creation, based on the Euclidean distance, before moving to a more flexible metric \textit{i.e.} the Minkowski $p$-distance (\citet{Oti2021}). 

On the one hand, K-means involves assigning points to clusters by iteratively measuring the distance between the points and the centroids of the clusters, with centroids being updated at each step of the algorithm until convergence. \citet{Jain2008,Jain2010} shows that more than one thousand algorithms are available for K-means clustering. The developments of K-means are either concerned with initialization, convergence and partitioning the data (see for instance \citet{Pelleg1999,Steinbach2000}) or the distance metric to be used to measure the proximity or the similarity between points and centroids, see for example the Kernel K-means to incorporate non-linearity in distances (\citet{Scholkopf1998}), the Fuzzy C-means (\citet{Eschrich2003}), the K-medoids based on the median (\citet{Kaufman2005}), the X-means with Akaike or Bayesian information criteria (\citet{Pelleg2000}), and recently Fair K-means (\citet{Bateni2024}) including fairness in distance functions. 

On the other hand, KNN allows assigning points of the testing dataset to labels by measuring the distance between these test points and the K labeled points in the training dataset that are closest. The algorithm is therefore dependent on the number of neighbors K, but also, and more importantly, dependent on the choice of the distance (\citet{Syriopoulos2023}). \citet{Prasatha2019} show that non-convex distances are well suited for managing noise in data whereas \citet{Vincent2001} show that convex distances defined on non-linear manifolds may be a good strategy to refine the classification. As in the K-means, weighting the points differently (\citet{Gini1912}) or by means of fuzzification may improve the accuracy of the  algorithm   (\citet{Derrac2014fuzzy}). However, as mentioned in \citet{Prasatha2019}, comparing 44 distances brings out 8 distances that outperform the others on many datasets (see Section \ref{section:state-of-the-art} for details).  

Prametric (or premetric) spaces have been introduced by \citet{Arkhangelskii1990}. From our knowledge, apart from the use of non-convex distances, prametrics have not been employed in classification. 
A prametric space is a topological space that is more general than metric spaces, requiring neither symmetry nor indistinguishability, nor the validity of triangular inequality. It is provided with a function $d$ that is more general than a ``distance” in the usual sense, since $d$ must satisfy non-negativity $d(\x,\y) \geq 0$ and $d(\x,\x) = 0$. $d$ is then said to be a ``prametric", and since prametrics are not metrics (they usually do not satisfy the triangle inequality), they do not induce metric spaces as usual in topology. Indeed, the topology of a space induced by a prametric is necessarily sequential. 

The Gini index is often used in multivariate statistics because it is a rank-dependent statistics robust to noise and outliers. This robustness holds true for many tasks such as regressions (\citet{Yitzhaki2004,Yitzhaki2013}) or compression (\citet{charpentier2021principal}). We show in this paper that a \textit{prametric Gini} can be defined with a hyper-parameter and employed for supervised classification and clustering to deal with noisy data. The main contributions are:

1. To propose a Gini prametric based both on rank features and values of the points to capture two kinds of information, in particular the rank vectors for robustness.

2. To show that the proposed Gini K-means algorithm is convergent and robust to noise.

3. To show that Gini KNN may compete with KNN based on Hassanat distance, which is considered to be the most robust to noise.

The paper is structured as follows: Section \ref{section:state-of-the-art} introduces
existing works related to distance metrics for K-means and KNN algorithms. Section \ref{section:Gini_prametric} presents the new Gini prametric proposed and investigated in our work. Section
\ref{section:clustering} introduces Gini K-means and Gini KNN. Section \ref{exp} is dedicated to the evaluation protocol and outlines the
performance of the Gini prametric compared to existing approaches.
Section \ref{section:conclusion} discusses our findings before mentioning 
perspectives.


\section{State of the art}\label{section:state-of-the-art}
Metric distances play a central role in both supervised and unsupervised learning methods. The choice of the distance is challenging in practice, because of their  variety and also because of the difficulty of determining in advance which distance is the most appropriate for a given dataset. However, \citet{Prasatha2019} demonstrate that two distances outperform other ones (among a set of 44 distances) when dealing with noisy datasets and the K-nearest neighbor (KNN) classifier: the Hassanat distance and distances based on the L1 norm (Manhattan or city block distance), such as the Lorentzian and the Canberra distances. Other studies on K-means show that robust clustering may be performed thanks to weighted Manhattan distances (\citet{An2024}), quantum Manhattan distances (\citet{Wu2022}).

\begin{small}    
\begin{table}[H]
    \centering
    \begin{tabular}{cl}
    \hline\hline
       Minkowski (F1) & \\ \cline{1-1}
       L$p$ distance & $\sqrt[p]{ \sum_{j=1}^d |x_{j} - y_{j}|^p }$   \\
       Manhattan & $\sum_{j=1}^d |x_{j} - y_{j}|$
       \\ \hline\hline
        L1 distances (F2) &  \\ \cline{1-1}
        Lorentzian & $\sum_{j=1}^d \ln(1+|x_{j} - y_{j}|)$\\ 
         Canberra & $ \sum_{j=1}^d \displaystyle\frac{|x_{j} - y_{j}|}{|x_{j}| + |y_{j}|}$ \\ \hline\hline
          Inner product (F3)& \\ \cline{1-1}
         Cosine distance & $1-\displaystyle\frac{\sum_{j=1}^d x_jy_j}{\sqrt{\sum_{j=1}^dx_j^2}\sqrt{\sum_{j=1}^dy_j^2}}$\\ \hline\hline
          Squared Chord (F4) & \\ \cline{1-1} Hellinger
         & $\sqrt{2\sum_{j=1}^d(\sqrt{x_j} - \sqrt{y_j})^2}$ \\ \hline \hline
         Squared L2 (F5) & \\ \cline{1-1}
        Pearson Chi squared & $\displaystyle\sum_{j=1}^d \displaystyle\frac{(x_{j} - y_{j})^2}{y_{j}^2}$ 
         \\ Squared Chi distance & $\displaystyle\sum_{j=1}^d \displaystyle\frac{(x_{j} - y_{j})^2}{|x_{j} + y_{j}|}$ \\ \hline\hline
          Shannon entropy (F6)& \\ \cline{1-1}
         Jensen-Shannon  & $0.5\Big(\sum_{j=1}^d x_j\ln\displaystyle\frac{2x_{j}}{x_{j}+y_{j}}$ \\
         &$+\sum_{j=1}^d y_j\ln\displaystyle\frac{2y_{j}}{x_{j}+y_{j}}\Big)$ \\ \hline\hline
          Vicissitude (F7)& \\ \cline{1-1} Vicis Symmetric
         & $\displaystyle\sum_{j=1}^d \displaystyle\frac{(x_{j} - y_{j})^2}{\min(x_{j},y_{j})^2}$ \\ \hline\hline
          Others (F8) & \\ \cline{1-1}
         Hassanat & \footnotesize{if $\min(x_j,y_j) \geq 0$:}  \\
         &$1-\displaystyle\frac{1+\min(x_j,y_j)}{1+\max(x_j,y_j)}$ {else:} \\ 
          \multicolumn{2}{r}{$1-\displaystyle\frac{1+\min(x_j,y_j)+|\min(x_j,y_j)|}{1+\max(x_j,y_j)+|\min(x_j,y_j)|}$} \\ \hline\hline
    \end{tabular}
    \caption{Eight Families of distances in $\mathbb{R}^d$.}
    \label{tab:distances}
\end{table}    
\end{small}

A distance satisfies the following properties: non-negativity, symmetry, nullity if and only if the points are equal, and finally the triangle inequality. Apart from \citet{Elkan2003} who shows that the triangle inequality may serve to accelerate the K-means algorithm, there is no justification for all four of these properties to be satisfied at the same time. It might be possible to resort to pseudo-distances or prametrics to look for more flexible functional forms. From the review made by \citet{Prasatha2019}, eight families of distances can be identified in the literature. Let $\x,\y \in \mathbb R^d$ be two points, we provide in Table \ref{tab:distances} the eight families with one member of each family (or two)  that receives attention from the literature. 

In particular, \citet{Prasatha2019} show that Hassanat's distance  \cite{Hassanat2014} outperforms 43 other distances of the eight families based on the average precision and recall over many datasets of the UCI machine learning repository. They perform the same experiment with noise (from 10\% to 90\% of noise in the data with increment of 10\%) and find that the same ranking is obtained in terms of averaged recalls and precisions: 1) Hassanat (F8), 2) Lorentzian (F2), 3) Canberra (F2). The F1 and F3 distance families are less efficient, particularly the Euclidean and Cosine distances, as well as the Pearson distance (F8) based on Pearson's correlation coefficient.

\section{Gini prametric spaces }\label{section:Gini_prametric}
\subsection{Variance, Covariance, Norm and Distance}

Given a vector $\x\in\mathbb{R}^d$, its L2 norm is simply $\|\x\|_2=\displaystyle{\sqrt{\x^\top\x}}$, induced by the natural inner product in $\mathbb{R}^d$, $\langle\x,\y\rangle = \x^\top\y$. And the associated L2 distance is $d_2(\x,\y)=\|\x-\y\|_2$, but if neither first order homogeneity nor the triangle equality are need, we could simply consider the prametric $d_2(\x,\y)^2$, i.e., $(\x-\y)^\top(\x-\y)$. In statistics, the norm is associated with the ``standard deviation'', and the inner product is the standard Pearson's linear correlation, for centered vectors. For a centered random vector $\mathbf{X}$, one can define its Pearson-covariance matrix as $\text{Cov}(\X)=\mathbb{E}[\X^\top\X]$.

Nevertheless, it is well established that those measures are sensitive to outliers. A classical strategy is then to move from the L2 norm $\|\x\|_2$ to the L1 norm $\|\x\|_1=|\x|^\top\boldsymbol{1}$. E.g., instead of using the average (solution of $\text{argmin}\lbrace d_2(\x,m)^2\rbrace$), one could consider the median (solution of $\text{argmin}\lbrace d_1(\x,m)\rbrace$). And the L1 norm is related to ranks, since, if $x\in\mathbb{R}$ and $X$ denotes a real-valued random variable with c.d.f. $F$, $d_2(x,X)^2=\mathbb{E}[(x-X)^2]$ while $d_1(x,X)=\mathbb{E}[|x-X|]$. Then
 $$
 \nabla_x d_1(x,X) =\mathbb{E}[\text{sign}(x-X)]=2F(x)-1,
 $$
if $F$ is absolutely continuous, while, if $\x=(x_1,\cdots,x_n)$,
 $$
 \nabla_x d_1(x,\x) =2\widehat{F}_n(x)-1,
 $$
when the empirical e.c.d.f. is based on ranks (if $\x$ is ordered, $\widehat{F}_n(x)=R_{\x}(x_i)/n$ where $x_i\leq x\leq x_{i+1}$, and where $R_{\x}(x_i)$ denotes the rank of $x_i$ among $\x=(x_1,\cdots,x_n))$.
The use of ranks has been intensively considered in statistics to avoid the influence of outliers, as discussed in \citet{kendall1948rank,hollander1973nonparametric,huber1981robust}. Hence, \citet{spearman1904proof} suggested to use Spearman's covariance, where ranks are considered, instead of numerical values. More recently, \citet{Carcea2015} and \citet{Shelef2019} show the robustness of the Gini-covariance (see subsection \ref{sect:Gini-cov}) that can be used for the Gini prametric.

\subsection{Gini Index and Gini-Covariance}

The popular Gini index is defined as a normalized version of ``Gini Mean Difference'', \citet{Gini1912}, defined as the expected absolute difference between two randomly drawn observations from the same population. The later can be rewritten with the covariance operator, $$GMD(X) = 4\cov(X,F(X)).$$
\citet{Schechtman1987} introduced a Gini-covariance operator, defined, for a pair of variables, as
$$
GMD(X,Y)=4\cov(X,F_Y(Y))=2\cov(X,G_Y(Y))
$$
where $G_Y(Y)=2F_Y(Y)-1$ is centered, and therefore
$$
GMD(X,Y)=2\mathbb{E}[XG_Y(Y)].
$$
This dispersion measure, $GMD$, involves the usual covariance of one of the variables with the rank of the other one, as a compromise between Pearson covariance $\cov(X,Y)$ and the Spearman version, based on pure ranks, $\cov(F_X(X),F_Y(Y))$. Its empirical version is a $U$-statistic. If it is (unfortunately) asymmetric, this function is homogeneous of order 1, in the sense that, for $\lambda>0$, 
 $$GMD(\lambda X,\lambda Y)=\lambda GMD(X,Y).$$
And more generally, given a random vector $\boldsymbol{X}$ in $\mathbb{R}^d$, its Gini-covariance matrix is then
$$
 GMD(\boldsymbol{X}) = 2\mathbb{E}\big[\boldsymbol{X} G(\boldsymbol{X})^\top\big],
 $$
 which is a positive matrix (see \citet{dang2019gini,charpentier2021principal}). Thus, in dimension 2, we can define the Gini-covariance between vectors in $\mathbb{R}^d$ as,
$$
GMD(\x,\y) = \frac{2}{n^2} \sum_{i=1}^n x_i (2R_{\y}(y_i)-1).
$$
We can now use those dissimilarity measures to define a prametric distance.

\subsection{Gini Prametric}

Let $\X \equiv [x_{i,j}] \in \mathbb{R}^{n\times d}$ some data with $n$ points and $d$ features. The (cumulative) rank vector ${R}_{X_j}$ of feature $X_j$ assigns to each element of vector (column) $X_j$ its position (in ascending order of $X_j$). Inspired by the prametric defined as the  squared Euclidean L2 distance, 
$$
d_2(\x_i,\x_k)^2 = \sum\limits_{j=1}^d (x_{i,j} - x_{k,j})(x_{i,j} - x_{k,j}),
$$
we can define Gini prametric as follows:

\begin{definition}
 The \textit{Gini prametric} (or Gini distance) is defined for all $\x_i,\x_k \in \mathbb R^d$ such that $\x_i$ and $\x_k$ are two given rows of $\X$ (denoted ``$\x_i,\x_k \in \X$''):
\[
d_G(\x_i,\x_k)=\sum\limits_{j=1}^d (x_{i,j} - x_{k,j})({R}_{X_j}(x_{i,j}) - {R}_{X_j}(x_{k,j})),
\]
where ${R}_{X_j}(x_{i,j})$ denotes the ascending rank of element $x_{i,j}$ in $X_j=(x_{1,j},\cdots,x_{n,j})$.
\end{definition}

In practice, the average rank method is usually employed to deal with ties between many points, in this case the arithmetic mean of their ranks is computed. This avoids points with same values to be weighted by different ranks (this would imply for instance a bias in the Gini index, see \citet{yitzhaki2013gini}). 

This {Gini prametric} is the empirical version of the following function defined for all $\x,\y \in \mathbb R^d$, seen as realizations of a random vector $\X$, as (up to a multiplicative constant)
\[
d_G(\x,\y)=\sum\limits_{j=1}^d (x_{j} - y_{j})(F_j(x_{j}) - F_j(y_{j})),
\]
where $F_j$ is the marginal c.d.f. of the $j$-th component of random vector $\X$.

Back to the empirical version, contrary to L2 distance (or more generally any Minkowski $p$-distance with $p>1$) for which the distance is null if and only if $\x_i = \x_k$, many cases may arise for the Gini distance function. Let $R_X(\x_k) = (R_{X_1}(\x_{k,1}),\cdots, R_{X_d}(\x_{k,d}))$ be the rank vector of point $\x_k$, then
$$
d_G(\x_i,\x_k) =
\left \{
\begin{array}{ll}
   0,  & \text{if } \x_i = \x_k \\
   0,  & \text{if } R_X(\x_i) = R_X(\x_k) \\
   0,  & \text{if } x_{i,j} = x_{k,j} \text{ and/or } \\ &  R_{X_\ell}(x_{i,\ell}) = R_{X_\ell}(x_{k,\ell}) \\
   & \text{ for all } j,\ell \in \{1,\cdots,d\} 
\end{array}
\right.
$$
Consequently the Gini prametric is null if $\x_i = \x_k$, a standard property of distance functions. This condition does not imply $R_X(\x_i) = R_X(\x_k)$ since the position of the values of $\x_i$ (and $\x_k$)  depend on the rank within the features (columns) $X_j$. The second case $R_X(\x_i) = R_X(\x_k)$ implies $d_G(\x_i,\x_k)=0$. This means that the Gini prametric is sensitive to the rank dependency of the features. When all features have the same rank vectors, then the distance between each and every pairs of points is null. 

In general, the Gini prametric does not coincide with the Euclidean L2 distance nor the Manhattan L1 one, apart from specific cases. 
If a model based on $d_G$ needs to be trained again with a new observation $\x_{new}$, some rerankings of the points may occur implying new distances between the points. However, in some particular situations in which rank vectors are just translated by the same amount, the distance remains constant. With two points, $\x_1 = (0, 3) $ and $\x_2 = (4, 2) $,
$$
d_G(\x_1,\x_2) = (0-4)\cdot (1-2)+(3-2)\cdot (2-1)=4+1=5.
$$
But if there was a third point, $\x_{3} = (2,1.5)$, then the ranks of points $\x_1$ and $\x_2$ are rescaled, with respectively $R_X(\x_1) = (1,3)$ and $R_X(\x_2) = (3,2)$, so that
$$
d_G(\x_1,\x_2) = (0-4)\cdot (1-3)+(3-2)\cdot (3-2)=8+1=9.
$$
Accordingly, a strategy for computing \textit{conditional ranks} is necessary when new instances are added to the dataset (for train-test splits this will be described in Section \ref{section:clustering}). 
The main properties of the Gini prametric are the following. 

\begin{proposition}\label{proposition-distance} \emph{\textbf{(Gini prametric)}}
\newline Let $\x_i,\x_k \in \X$ such that $\X \in \mathbb R^{n\times d}$, let $\un^{n\times d}$ a matrix of ones, and let $\mathbb N$ be the set of positive integers:
\newline $\bulle$ $d_G(\x_i,\x_k) = 0$ if $\x_i=\x_k$ \textcolor{blue}{\emph{(Nullity)}}
\newline $\bulle$ $d_G(\x_i,\x_k) = 0$ if $R_X(\x_i)=R_X(\x_k)$ \textcolor{blue}{\emph{(Rank-Nullity)}}
\newline $\bulle$ $d_G(\x_i,\x_k) \geq 0 $ \textcolor{blue}{\emph{(Non-Negativity)}}
\newline $\bulle$ $d_G(\x_i,\x_k) = d_G(\x_k,\x_i) $ \textcolor{blue}{\emph{(Symmetry)}}
\newline $\bulle$ $d_G(\x'_i, \x'_k) = d_G(\x_i, \x_k)$ if $\X' = \X + \lambda \un^{n\times d}$ $\forall \lambda >0$
\textcolor{blue}{\emph{(Linear Invariance)}}
\newline $\bulle$ $d_G(\x'_i, \x'_k) = d_G(\x_i, \x_k)$ if $R_X(\x'_i) = R_X(\x_i) + \alpha \un^d$ and $R_X(\x'_k) = R_X(\x_k) + \alpha \un^d$ $\forall \alpha \in \mathbb N$ 
\textcolor{blue}{\emph{(Rank Invariance)}}
\end{proposition}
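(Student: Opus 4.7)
The plan is to verify each of the six properties as a short consequence of two structural facts about the defining sum
\[
d_G(\x_i,\x_k)=\sum_{j=1}^d (x_{i,j}-x_{k,j})\bigl(R_{X_j}(x_{i,j})-R_{X_j}(x_{k,j})\bigr).
\]
The first is that each summand is a product of a value-difference and a rank-difference. The second is that, with the average-rank tie-breaking convention, the column rank map $R_{X_j}$ is monotonically non-decreasing in its argument (equal values receiving equal average ranks, strictly smaller values receiving strictly smaller ranks).

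Given these, Nullity, Rank-Nullity, and Symmetry follow immediately. If $\x_i=\x_k$ every value-difference vanishes; if $R_X(\x_i)=R_X(\x_k)$ every rank-difference vanishes; and swapping $\x_i$ and $\x_k$ reverses the sign of both factors in each summand, leaving each product and the total sum invariant.

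Non-Negativity is the only step that genuinely uses the monotonicity fact. Because $R_{X_j}$ is non-decreasing, $\mathrm{sign}(x_{i,j}-x_{k,j})$ agrees with $\mathrm{sign}\bigl(R_{X_j}(x_{i,j})-R_{X_j}(x_{k,j})\bigr)$ for every $j$, so each summand is a product of two real numbers of matching sign and is non-negative; a sum of non-negative terms is non-negative.

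For the two invariance statements, I would argue as follows. For Linear Invariance, adding $\lambda\un^{n\times d}$ shifts every entry of column $X_j$ by the same constant $\lambda$, which preserves within-column orderings (so ranks are unchanged) and cancels in the value differences $x_{i,j}-x_{k,j}$; hence both factors in each summand are unchanged. For Rank Invariance, the hypothesis yields $R_X(\x'_i)-R_X(\x'_k)=R_X(\x_i)-R_X(\x_k)$ componentwise, and the two points themselves carry the same numerical coordinates (only the surrounding dataset is modified), so the value-difference factors also agree. The only real obstacle is the verification of the monotonicity of $R_{X_j}$ under the average-rank convention; once that is in hand, every property above reduces to a one-line algebraic observation on the defining sum.
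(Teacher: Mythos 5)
Your proof is correct, but it takes a genuinely different route from the paper. The paper disposes of Proposition 1 in one line by appealing to "the properties of the Gini-covariance" in Yitzhaki and Schechtman, i.e.\ it inherits Nullity, Non-Negativity, Symmetry and the invariances from known facts about the operator $\cov(X, F_Y(Y))$. You instead verify everything directly from the defining sum, isolating the two structural facts (product of a value-difference and a rank-difference in each summand; monotonicity of the column rank map under the average-rank tie convention) and checking that each bullet reduces to one of them. Your approach is more self-contained and, arguably, more honest about where the work is: Non-Negativity does not follow from generic positivity of a Gini covariance (the Gini covariance of two \emph{distinct} variables can be negative); it follows from the specific sign-matching between $x_{i,j}-x_{k,j}$ and $R_{X_j}(x_{i,j})-R_{X_j}(x_{k,j})$ within each coordinate, which you identify correctly. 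Likewise, Symmetry here is a consequence of the difference-of-differences structure, not of symmetry of the Gini covariance (which is famously asymmetric), so your direct sign-flip argument is the right one. The one lemma you defer — that strictly larger values receive strictly larger average ranks — is indeed routine: the tie group of a larger value occupies positions entirely above those of the tie group of a smaller value, so its average position is strictly larger. What the paper's citation buys is brevity and a link to the established Gini literature; what your verification buys is a complete, checkable argument that makes the role of the average-rank convention explicit.
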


\begin{proof}
Straightforward from the properties of the Gini-covariance (\citet{yitzhaki2013gini}).    
\end{proof}

Linear invariance postulates that if the elements in $\X$ are increasing while their ranks remain constant, then the Gini prametric is invariant to such a translation. On the other hand, if a new point is included in $\X$, or if $\X$ is transformed in such a way that the ranks increase by the same amount, the Gini prametric remains invariant. Finally, it is non-negative, symmetric and does not satisfy the triangle inequality, therefore following the terminology employed by \citet{Arkhangelskii1990}, it can be referred to as a Gini \textit{symmetric}, such that $(\mathbb R^{n\times d}, d_G)$ is a Gini pramatric space.  

Notice that if $\overline{R}$ denotes descending ranks,
\[
d_G(\x_i,\x_k)=-\sum\limits_{j=1}^d (x_{i,j} - x_{k,j})(\overline{R}_{X_j}(x_{i,j}) - \overline{R}_{X_j}(x_{k,j})).
\]

\subsection{The Generalized Gini Prametric}\label{sect:Gini-cov}

In a series of papers \citet{Schechtman1987,schechtman2003family,yitzhaki2013gini} define a generalized version of the Gini Mean Difference:
$$
GMD_\nu(X,Y) = -{2\nu}\ \cov(X,\overline{F}_Y(Y)^{\nu-1}) ,  \ \nu > 1,
$$
with $\overline{F}_Y$ the decumulative distribution function of $Y$ (i.e., $\overline{F}_Y=1-F_Y$). \citet{yitzhaki2013gini} show that the Gini-covariance operator $\cov(X,\overline{F}_Y(Y)^{\nu-1})$ is a robust version of the standard covariance operator. The robustness to outliers  allows Gini-based measures to be enough flexible thanks to the hyper-parameter $\nu$. This flexibility yields robust estimators of regression parameters  as well as correlation coefficients \cite{Yitzhaki2013}. Additionally, in the case of $\nu = 2$ for  multivariate normal data, least squares regressions become a particular case of Gini regressions. Generalized Gini principal components analysis, as introduced by \citet{charpentier2021principal}, enables the influence of outliers to be minimized and helps find the best sub-space to fit the data according to the value of $\nu$ (for $\nu=2$ and multivariate normal data, the usual principal component analysis is recovered). Following \citet{schechtman2003family}, the use of decumulative ranks raised to the power of $\nu$ allows the Gini prametric to be rewritten as follows:

\begin{definition}
 The \textit{Generalized Gini prametric} $d_{G,\nu}(\x_i,\x_k)$ is defined for all $\x_i,\x_k \in \mathbb R^d$ such that $\x_i$ and $\x_k$ are two given rows of $\X$ (denoted ``$\x_i,\x_k \in \X$''):
\[
d_{G,\nu}(\x_i,\x_k)=-\sum\limits_{j=1}^d (x_{i,j} - x_{k,j})(\overline{R}_{X_j}(x_{i,j})^{\nu-1} - \overline{R}_{X_j}(x_{k,j})^{\nu-1}),
\]
for $\nu>1$, where $\overline{R}_{X_j}(x_{i,j})$ denotes the descending rank of element $x_{i,j}$ in $X_j=(x_{1,j},\cdots,x_{n,j})$.
\end{definition}

This generalized Gini prametric has the same property as $d_G$ except Rank Invariance due to the hyper-parameter $\nu$ (if $\nu\neq 2)$. 

\begin{proposition}\label{proposition-generalized-distance} \emph{\textbf{(Generalized Gini prametric)}}
\newline The generalized Gini prametric 
$d_{G,\nu}: \mathbb R^{d} \times \mathbb R^{d} \rightarrow \mathbb R_+$ 
satisfies Nullity (both), Non-Negativity, Symmetry and Linear Invariance.
\end{proposition}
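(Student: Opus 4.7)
The plan is to verify each of the four claimed properties in turn by direct computation, leveraging the fact that descending ranks $\overline{R}_{X_j}$ are a monotone-decreasing reordering of the ascending ranks $R_{X_j}$, and that for $\nu>1$ the map $t\mapsto t^{\nu-1}$ is strictly increasing on $\mathbb{R}_+$. Each property follows from inspecting a single generic term of the sum
\[
T_j := -(x_{i,j}-x_{k,j})\bigl(\overline{R}_{X_j}(x_{i,j})^{\nu-1}-\overline{R}_{X_j}(x_{k,j})^{\nu-1}\bigr).
\]

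First I would dispatch the two nullity clauses. If $\x_i=\x_k$ then $x_{i,j}-x_{k,j}=0$ for every $j$, so every $T_j$ vanishes. If instead $R_X(\x_i)=R_X(\x_k)$ (ascending ranks equal), then descending ranks coincide componentwise as well (since $\overline{R}=n+1-R$ when there are no ties, or more generally the descending rank is a deterministic function of the ascending rank of the same feature), so the second factor in every $T_j$ vanishes. Symmetry is then immediate: swapping $\x_i$ and $\x_k$ flips the sign of both factors inside $T_j$, leaving the product invariant.

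Next I would handle non-negativity, which is the only step that actually uses $\nu>1$. Fix a coordinate $j$ and suppose without loss of generality that $x_{i,j}\geq x_{k,j}$. Then $\overline{R}_{X_j}(x_{i,j})\leq \overline{R}_{X_j}(x_{k,j})$ because a larger value has a smaller descending rank. Since $t\mapsto t^{\nu-1}$ is non-decreasing on $\mathbb{R}_+$ for $\nu>1$, this gives $\overline{R}_{X_j}(x_{i,j})^{\nu-1}-\overline{R}_{X_j}(x_{k,j})^{\nu-1}\leq 0$, so $T_j\geq 0$. The case $x_{i,j}\leq x_{k,j}$ is symmetric, hence $d_{G,\nu}(\x_i,\x_k)=\sum_j T_j\geq 0$.

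Finally, for linear invariance, let $\X'=\X+\lambda\un^{n\times d}$ with $\lambda>0$. The differences satisfy $x'_{i,j}-x'_{k,j}=x_{i,j}-x_{k,j}$, and adding the same constant $\lambda$ to every entry of column $X_j$ preserves the relative order of its entries, so $\overline{R}_{X'_j}(x'_{i,j})=\overline{R}_{X_j}(x_{i,j})$ for every $i,j$. Both factors of each $T_j$ are therefore unchanged, giving $d_{G,\nu}(\x'_i,\x'_k)=d_{G,\nu}(\x_i,\x_k)$. Rank invariance is explicitly excluded because shifting $\overline{R}_{X_j}$ by $\alpha$ does not commute with raising to the power $\nu-1$ unless $\nu=2$; this is the only place where the generalization genuinely breaks a property of $d_G$, so no proof obligation remains. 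There is no real obstacle here; the main subtlety is just being careful that $\nu>1$ is exactly what makes $t\mapsto t^{\nu-1}$ order-preserving on the positive ranks, which is what drives non-negativity.
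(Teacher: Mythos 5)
Your proof is correct, and it is genuinely more explicit than what the paper provides: the paper's entire proof is a one-line appeal to the known properties of the Gini-covariance operator in \citet{yitzhaki2013gini}, whereas you verify each property coordinatewise from the definition. The comparison is favorable to your version in one important respect: your non-negativity argument isolates exactly where the hypothesis $\nu>1$ enters, namely that $t\mapsto t^{\nu-1}$ is non-decreasing on the positive ranks, so that $x_{i,j}\geq x_{k,j}$ forces $\overline{R}_{X_j}(x_{i,j})^{\nu-1}-\overline{R}_{X_j}(x_{k,j})^{\nu-1}\leq 0$ and hence $T_j\geq 0$; the paper leaves this buried in the cited covariance literature. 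Your closing remark on why Rank Invariance fails for $\nu\neq 2$ (the shift by $\alpha$ does not commute with the power $\nu-1$) also matches the paper's parenthetical claim and is not something the paper argues explicitly. The only point worth tightening is the tie-handling in the Rank-Nullity and Non-Negativity steps: with average ranks, equal values still receive equal descending ranks and strictly larger values strictly smaller ones, so your monotonicity argument goes through unchanged, but it is worth saying so since the paper explicitly adopts the average-rank convention. Nothing is missing; the two routes buy, respectively, brevity via citation (the paper) versus a self-contained elementary check (yours).
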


\begin{proof}
Straightforward from the properties of the Gini-covariance (\citet{yitzhaki2013gini}). 
\end{proof}

The generalized version of $d_G$ offers more flexibility about the rank correlation between the features to be taken into account. As in the Gini-covariance, $\nu=2$ is referred to be the median value, which places as much weight on the lower part of the feature distributions as on the upper part. For $\nu > 2$, more weights are put to higher ranks, whereas for $\nu < 2$, more weights are put on lower ranks. 


In order to interpret the correlations that lie behind the generalized Gini prametric, let us take centered variables $\x_k^c$ and centered rank vectors $\overline{R}_X^c(\x_k)^{\nu-1}$, then:
\begin{align}
& d_{G,\nu}^c(\x_i^c,\x_k^c) := \notag \\
& \ n\cov(\x_i^c,\overline{R}_X^c(\x_k)^{\nu-1})+n\cov(\x_k^c,\overline{R}_X^c(\x_i)^{\nu-1})  \notag \\
&-n\cov(\x_i^c,\overline{R}_X^c(\x_i)^{\nu-1})-n\cov(\x_k^c,\overline{R}_X^c(\x_k)^{\nu-1}). \notag
\end{align}
The two first terms of the previous relation correspond to the two Gini-covariances (up to $n$): $\x_i$ with the rank of $\x_k$ and conversely $\x_k$ with the ranks of $\x_i$. The two last terms are (up to $n$) generalized Gini Mean Difference, which correspond to the intrinsic variability of $\x_i$ and $\x_k$ respectively. Everything happens as if the generalized Gini prametric measured on points belonging to different groups captures the two Gini-covariances between groups in excess of the covariances within groups. In this respect, it could be included in the family of distances (F3) based on scalar products.

\section{Gini KNN and Gini K-means}\label{section:clustering}

Algorithms implemented in a Gini prametric space $(\mathbb{R}^{n\times d},d_{G,\nu})$ may be used instead of those relying on standard L$p$ metric space $(\mathbb{R}^d,d_p)$ for a given integer $p \geq 1$. We will discuss two algorithms in this section, Gini KNN (\ref{sec:KNN}) and Gini K-means (\ref{sec:kmeans}).

\subsection{Gini KNN}\label{sec:KNN}

K-Nearest Neighbors (KNN) is a non-parametric algorithm used for classification or regression. After identifying the $k$ closest data points (neighbors) to a given input based on a distance metric (e.g. Gini prametric, Euclidean, Manhattan, etc.), the KNN consists in predicting the output based on the majority class (classification) or average value (regression) of these neighbors. 

The Gini KNN relies on two hyper-parameters to learn during the training phase of the KNN: the number of neighbors $k$ and the value of the hyper-parameter $\nu$. Since the approach is supervised there is no need to define \textit{ex ante} the $\nu$ value, which can be deduced to be the best parameter that maximizes the F-measure or other pre-defined metrics (precision, recall, etc.). In the same time, the optimal value of $k$ may be deduced from the maximization of the same metrics. The prediction of the test points (regression task or classification task) depends on the computation of their \textit{conditional ranks}, necessary for the computation of the generalized Gini prametric. The conditional ranks are ranks the test points would have if they were part of the training dataset. These are the $n_{te}$ last elements extracted from vector $\overline{R}_{\X}(\X)^\nu (n_{te}/n)$, with $n_{te}$ and $n$ the size of the testing data and the size of the whole dataset respectively.

Finally, the algorithm is convergent and its error rate is bounded.  

\begin{proposition}\label{KNN_convergence}\emph{\textbf{(KNN convergence and error rate bound)}}
\newline $\bulle$ The KNN algorithm converges using the generalized Gini prametric $d_{G,\nu}$, for all $\nu \neq 1$. 
\newline $\bulle$ Let $M$ be the number of classes, $R^{*}$ the Bayes error rate and $R_{KNN}$ the asymptotic KNN error rate, then
$$R^{*} \leq R_{KNN} \leq R^{*}\Big(2-\displaystyle\frac{MR^{*}}{M-1}\Big).
$$ 
\end{proposition}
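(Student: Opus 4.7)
The plan is to split the proposition into the two bullets and handle them sequentially: first establish consistency of the $k$-NN rule in the Gini prametric space, then invoke the classical Cover--Hart argument to obtain the error bound.

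For the first bullet, I would show that the empirical Gini prametric converges to a well-behaved population limit that separates distinct points. By Glivenko--Cantelli, the normalised empirical rank $\overline{R}_{X_j}(x)/n$ converges uniformly almost surely to $\overline{F}_j(x) = 1 - F_j(x)$, so the empirical prametric, after the natural rescaling by $n^{-(\nu-1)}$, converges to the population functional
$$
d_{G,\nu}^{\infty}(\x,\y) = -\sum_{j=1}^d (x_j - y_j)\bigl(\overline{F}_j(x_j)^{\nu-1} - \overline{F}_j(y_j)^{\nu-1}\bigr).
$$
Since $t \mapsto \overline{F}_j(t)^{\nu-1}$ is strictly monotone for every $\nu \neq 1$, each summand is non-negative and vanishes only when $\overline{F}_j(x_j) = \overline{F}_j(y_j)$; under absolute continuity of the marginals this forces $x_j = y_j$, so $d_{G,\nu}^\infty$ separates points. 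Hence the $k$ nearest neighbours of a query point $\x$ in the empirical prametric concentrate almost surely at $\x$, which is exactly the hypothesis of Stone's theorem (or its $k$-NN specialisation, Devroye--Györfi--Lugosi), yielding universal consistency of the classifier.

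For the second bullet, consistency implies that asymptotically the $k$-nearest neighbours carry labels drawn from $P(Y \mid X = \x)$, so the plan is to reproduce the pointwise Cover--Hart bound and integrate. Writing $\eta_i(\x) = P(Y = i \mid X = \x)$, the Bayes risk at $\x$ is $R^*(\x) = 1 - \max_i \eta_i(\x)$ and the asymptotic 1-NN risk at $\x$ is $R_{NN}(\x) = 1 - \sum_i \eta_i(\x)^2$. Maximising $\sum_i \eta_i(\x)^2$ subject to $\sum_i \eta_i(\x) = 1$ with the top coordinate fixed at $1 - R^*(\x)$ (that is, spreading the remaining mass uniformly over the other $M-1$ classes) produces the pointwise inequality $R_{NN}(\x) \leq R^*(\x)\bigl(2 - M R^*(\x)/(M-1)\bigr)$. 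Because the map $r \mapsto r\bigl(2 - Mr/(M-1)\bigr)$ is concave on $[0, (M-1)/M]$, Jensen's inequality lifts the bound to the integrated risk $R_{KNN} = \mathbb{E}[R_{NN}(X)] \leq R^*\bigl(2 - MR^*/(M-1)\bigr)$, while the lower bound $R^* \leq R_{KNN}$ is immediate since no rule beats Bayes.

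The main obstacle is the consistency step. Unlike a standard L$p$ metric, the Gini prametric need not separate distinct points (Proposition~\ref{proposition-distance} lists several configurations where it vanishes on non-equal pairs) and it fails the triangle inequality, so one cannot directly invoke metric-space arguments. The argument above circumvents this by showing that in the limit, under mild regularity of the marginals, rank collisions between distinct points form a null set, so $d_{G,\nu}^\infty$ inherits enough metric-like structure to force the nearest neighbours to the query point. The exclusion $\nu \neq 1$ is essential: at $\nu = 1$ the rank factor $\overline{F}_j(t)^{\nu-1}$ degenerates to a constant, collapsing the prametric and destroying the separation property.
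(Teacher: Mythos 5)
Your proposal is correct in substance and, for the error-rate bound, follows exactly the route the paper takes: the paper's entire proof is a one-line appeal to Cover and Hart (1967), asserting that ``the convergence is true for any type of metric used and the bound on the error rate does not change.'' Your second bullet simply reconstructs that classical argument (pointwise bound via maximising $\sum_i \eta_i^2$ with the top coordinate fixed, then Jensen to integrate), so there is nothing new there. Where you genuinely diverge is on the first bullet: you correctly observe that the Cover--Hart convergence lemma is not automatic here, because $d_{G,\nu}$ is only a prametric --- it fails the triangle inequality and, as Proposition~\ref{proposition-distance} makes explicit, can vanish on distinct pairs --- so the paper's claim that the argument works ``for any type of metric'' papers over the one step that actually needs checking. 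Your fix (pass to the population functional via Glivenko--Cantelli, then use strict monotonicity of $t\mapsto\overline{F}_j(t)^{\nu-1}$ to get separation of points almost surely, then feed this into the nearest-neighbour convergence lemma) is the right idea and buys a proof that is actually complete where the paper's is a citation. Two small caveats: separation only forces $x_j=y_j$ where the marginal density is positive (absolute continuity alone permits flat stretches of $F_j$), which is enough almost surely but should be stated that way; and your claim that each summand is non-negative ``for every $\nu\neq 1$'' holds only for $\nu>1$ --- for $\nu<1$ the map $t\mapsto t^{\nu-1}$ reverses monotonicity and the sign flips, so the argument as written covers the range in which the prametric is actually defined ($\nu>1$) rather than all $\nu\neq 1$ as the proposition states.
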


\begin{proof}
The proof is exactly the same as the one of \citet{coverhart1967}. The convergence is true for any type of metric used and the bound on the error rate does not change.
\end{proof}

\subsection{Gini K-means}\label{sec:kmeans}

K-means is an unsupervised clustering algorithm that groups data points into $k$ clusters by minimizing the Euclidean distance (variance), or other distance metrics, within each cluster. It iteratively assigns each point to the nearest cluster centroid and updates the centroids by calculating the mean of the points within each cluster. The process is repeated until the centroids converge or a maximum number of iterations is reached.

The Gini K-means algorithm based on the generalized Gini prametric may be implemented if two conditions are met. First, convergence must be ensured for any given dataset and hyper-parameters $\nu$. 

\begin{proposition}\label{proposition-K-means}\emph{\textbf{(Convergence)}}
\newline The Gini K-means algorithm based on the generalized Gini prametric $d_{G,\nu}$ is convergent whenever rank vectors stay constant, for all $\nu \neq 1$.  
\end{proposition}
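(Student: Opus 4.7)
The plan is to adapt the classical Lloyd-type convergence proof for K-means. The key object is the within-cluster Gini cost $J(\mathcal{C},\boldsymbol{\mu}) = \sum_{t=1}^{K} \sum_{\x_i \in C_t} d_{G,\nu}(\x_i,\mu_t)$, which is non-negative by the Non-Negativity property recorded in Proposition~\ref{proposition-generalized-distance}. Convergence will follow once I establish that each iteration never increases $J$ and that the pair $(\mathcal{C},\boldsymbol{\mu})$ visits only finitely many values, so that the monotone sequence $J$ must terminate.

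First I would handle the \emph{assignment step}, which replaces each label by $\arg\min_t d_{G,\nu}(\x_i,\mu_t)$. Because the minimisation is done independently for every point, the resulting configuration $\mathcal{C}^{\mathrm{new}}$ satisfies $J(\mathcal{C}^{\mathrm{new}},\boldsymbol{\mu}) \leq J(\mathcal{C},\boldsymbol{\mu})$ immediately. Second, I would handle the \emph{centroid update step}, and this is precisely where the hypothesis that rank vectors stay constant does its work: with the rank weights $\overline{R}_{X_j}(\cdot)^{\nu-1}$ of every data point and of every centroid held fixed, each inner sum $\sum_{\x_i \in C_t} d_{G,\nu}(\x_i,\mu)$ becomes a coordinate-separable expression in $\mu$ whose minimiser (a rank-weighted analogue of the usual arithmetic mean) can be written in closed form. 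Choosing $\mu_t$ to be this minimiser enforces $J(\mathcal{C},\boldsymbol{\mu}^{\mathrm{new}}) \leq J(\mathcal{C},\boldsymbol{\mu})$ by construction.

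Combining these two facts, $J$ is non-increasing along iterations and bounded below by $0$, hence convergent. Because there are at most $K^n$ distinct cluster assignments and the update rule is deterministic once $\mathcal{C}$ is fixed, the pair $(\mathcal{C},\boldsymbol{\mu})$ takes only finitely many values; monotonicity of $J$ then forces the algorithm to reach a fixed point in finitely many steps, exactly as in Lloyd's classical argument for Euclidean K-means.

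The main obstacle is the justification of the centroid-update step. In the unconstrained version of Gini K-means, perturbing $\mu_t$ reshuffles its own ranks $\overline{R}_{X_j}(\mu_{t,j})$ inside the sorted feature columns, so $\sum_{\x_i \in C_t} d_{G,\nu}(\x_i,\mu)$ turns into a piecewise expression whose global minimum need not be attained at a clean Lloyd-style update; worse, the very notion of ``moving in the right direction'' becomes ambiguous because moving $\mu_t$ changes the weights against which distances are computed. The hypothesis ``rank vectors stay constant'' is precisely what removes this difficulty, freezing the weights and making the inner optimisation a well-posed problem with an explicit minimiser. The exclusion $\nu = 1$ is also necessary: for $\nu = 1$ the weight $\overline{R}^{\nu-1} \equiv 1$ collapses $d_{G,1}$ to the zero function, leaving no non-trivial objective for the iterations to descend.
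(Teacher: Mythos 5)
Your overall skeleton is the same Lloyd-type descent argument the paper uses (assignment step cannot increase the within-cluster cost; centroid step cannot increase it because the new centroid minimizes it; finitely many configurations plus monotonicity forces termination), and your remark that $\nu=1$ collapses $d_{G,1}$ to the zero function is correct. But your treatment of the centroid-update step has a genuine gap, and it is exactly at the point where the paper makes a different choice. You take the objective to be $J=\sum_t\sum_{\x_i\in C_t} d_{G,\nu}(\x_i,\mu_t)$, i.e.\ the \emph{unsquared} prametric, and claim that once the ranks are frozen the inner sum is coordinate-separable with a closed-form ``rank-weighted mean'' minimizer. That claim fails: with the ranks $\overline{R}_{X_j}(x_{i,j})^{\nu-1}$ and $\overline{R}_{X_j}(\mu_{t,j})^{\nu-1}$ both held fixed, each term $-(x_{i,j}-\mu_{t,j})\big(\overline{R}_{X_j}(x_{i,j})^{\nu-1}-\overline{R}_{X_j}(\mu_{t,j})^{\nu-1}\big)$ is \emph{affine} in $\mu_{t,j}$, so the within-cluster sum is an affine function of the centroid: it has no interior minimizer (it is unbounded below unless the linear coefficients vanish), and in particular no closed-form mean-type minimizer. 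The paper avoids this by working with $\sum_{i\in C_t} d_{G,\nu}(\x_i,\mu_t)^2$: squaring turns the frozen-rank objective into a convex quadratic in $\mu_t$, whose stationary point the paper then identifies (under the constant-rank hypothesis) with the arithmetic mean of the cluster --- which is what the algorithm actually computes at the update step. Your proof never verifies that the update the algorithm performs (the mean) decreases \emph{your} objective, and for the unsquared objective it generally does not.

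To repair your argument you would either have to switch to the squared objective and carry out the stationarity computation (as the paper does), or prove directly that the arithmetic mean minimizes $\sum_{i\in C} d_{G,\nu}(\x_i,\mathbf{z})$ over $\mathbf{z}$ under the frozen-rank hypothesis, which is false in general for an affine function. As it stands, the monotonicity of $J$ across the centroid step --- the heart of the convergence claim --- is not established.
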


\begin{proof}
See the Supplementary Materials \citep{mussard2025}.
\end{proof}

Second, since convergence is always guaranteed, an optimal $\nu$ value may be derived. Setting training and testing data as $\X := [\X_{tr}^\top, \X_{te}^\top]^\top$, a grid search may be conducted as follows. The Gini K-means algorithm is trained on $\X_{tr}$ and the silhouette criterion \citep{rousseeuw1987silhouettes} is computed for each value of $\nu$ (not exceeding 6, otherwise too much weight is placed on minimal values of $\X$). The optimal hyper-parameter $\nu^*$ is the value that maximizes the silhouette score. Once $\nu^*$ is determined, the labels of the testing data can be inferred. 

\begin{algorithm}[H]
\caption{
\begin{center}
\textbf{Gini K-means: $\nu^*$}
\end{center}
}\label{MC}
\begin{algorithmic}
\STATE {\bfseries Input:} training data $\X_{tr}$ 
\FOR{$\nu$ in $[0.1 ; 6]$ $\nu\neq1$}
\FOR{fold in folds}
\STATE $K$ random centroids in $\X_{tr}$[fold] (\texttt{k-means++}) \; 
\REPEAT 
\STATE labels $\leftarrow$ $\min d_{G,\nu}(\X_{tr}$[fold],$centroids)$ 
\STATE update centroids
\UNTIL $\Delta d_{G,\nu}(\X_{tr}$[fold],$centroids)=\mathbf{0}$
\STATE $\overline{\boldsymbol{R}}_{\X}=$ conditional ranks of $\X_{tr}$[$-$fold]
\STATE labels $\leftarrow$ $\min d_{G,\nu}(\X_{tr}$[$-$fold],$centroids;\overline{\boldsymbol{R}}_{\X})$
\STATE silh\_score[fold] = silhouette($\X_{tr}$[fold], labels)
\ENDFOR
\STATE mean\_silh\_score$(\nu)$ = mean(silh\_score)
\ENDFOR
\STATE {\bf return} $\nu^* = \arg\max$ mean\_silh\_score$(\nu)$
\end{algorithmic}
\end{algorithm}

Once the hyper-parameter is obtained, Euclidean and Gini K-means may be compared, as depicted in Figure \ref{k-means-blobs}.

\begin{figure}[h!]
    \centering
\includegraphics[width=0.9\linewidth]{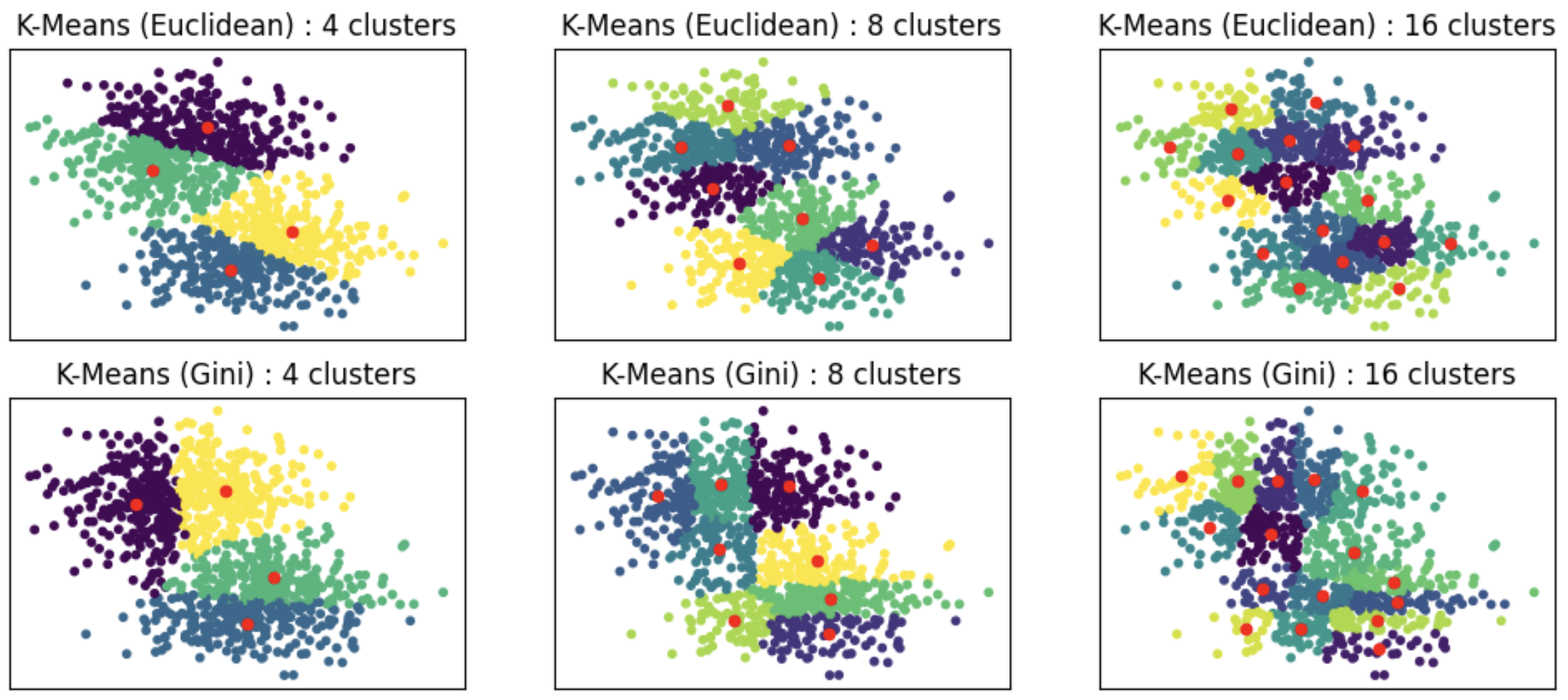}
    \caption{Example: Euclidean vs. Generalized Gini prametric $\nu^*=3.52$}\label{k-means-blobs}
\end{figure}
\bigskip
\bigskip

\begin{figure}[!ht]
    \centering
\includegraphics[width=0.7\linewidth]{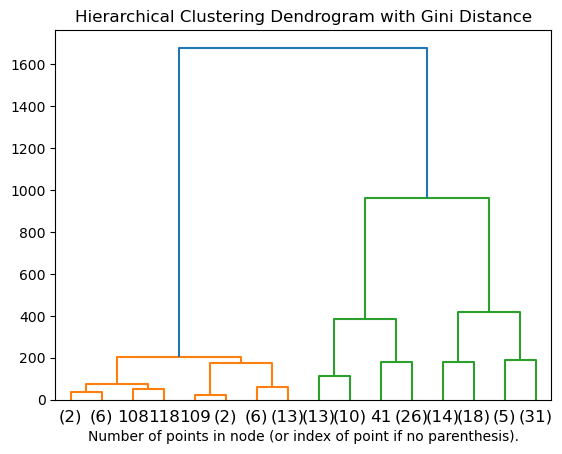}
\caption{Example of agglomerative clustering: $\nu=2$}
\label{fig:HAC}
\end{figure}
\bigskip \ 
\bigskip

Finally, other clustering techniques relying on the Euclidean distance, close to K-means, may be adapted to the generalized Gini prametric, such as the agglomerative clustering (see subsection \ref{sect:agglo}). An illustration is depicted in Figure \ref{fig:HAC} on the iris dataset.\footnote{ \href{https://github.com/giniknnkmeans/KNN_Kmeans_Gini_prametric}{See Github}.}

\section{Experiments}\label{exp}
K-means and KNN classifications have been performed adapting the \texttt{scikit-learn} library to the generalized Gini prametric. An additional illustration is also conducted with agglomerative clustering. These algorithms are compared on the basis of 12 distances issued from the 8 families (described in Table \ref{tab:distances}): L$p$ Minkowski distance (Euclidean, Minkowski, Manhattan), L1 distance (Canberra), Inner product distance (Cosine), Squared Chord distance (Hellinger distance), Squared L2 distance (Pearson $\chi^2$), Shannon entropy distance (Jensen-Shannon), Vicissitude distance (Vicis Symmetric), and other distances (Hassanat). The comparison is performed over 16 datasets of UCI repository\footnote{\href{https://archive.ics.uci.edu/}{UCI datasets repository link.}}, without normalization in order to observe the impact of outliers (if present) on the performance. Following the study of \citet{Prasatha2019}, the 16 datasets have been chosen with regard to the excellent performance of some distances robust to outliers, in particular the Hassanat distance. The datasets are described in Table~\ref{datasets}. The detailed metrics of the Gini K-means and Gini KNN experiments, analyzed in the following subsections, are relegated in Supplementary Materials \citep{mussard2025}. 

\begin{table}[ht]
\centering
\resizebox{\columnwidth}{!}{
\begin{tabular}{||l |c |c |c| c| c| c ||} 
 \hline
 Name & \#E & \#F & \#C & Data type & Min & Max  \\ [0.5ex] 
 \hline\hline
 \textbf{(A)} Australian & 690 & 14 & 2 & Real & 0 & 100001  \\
 \hline
\textbf{(B)} Banknote & 1372 & 4 & 2 & Real & -13.77 & 17.93  \\
 \hline
\textbf{(Bc)} Breast-Cancer & 569 & 30 & 2 &Real, numeric & 1 & 13454352  \\
 \hline
\textbf{(I)} Ionosphere & 351 & 34 & 2 & Real & -1 & 1 \\
 \hline
\textbf{(Ir)} Iris & 150 & 4 & 3 & Real, numeric & 0.1 & 7.9 \\ 
 \hline
\textbf{(G)} German & 1000 & 20 & 2 & Integer, Categorical & 0 & 184  \\
 \hline
\textbf{(Gl)} Glass & 214 & 9 & 6 & Real & 0 & 75.41  \\
 \hline
\textbf{(He)} Heart & 303 & 13 & 2 & Integer & 0 & 564  \\
 \hline
\textbf{(QS)} QSAR & 1060 & 41 & 2 & Real & -5.256 & 147  \\
 \hline
\textbf{(S)} Sonar & 208 & 60 & 2 & Real, Categorical & 0 & 1  \\ 
\hline
\textbf{(V)} Vehicle & 946 & 18 & 4 & Integer & 0 & 1018 \\
 \hline
\textbf{(Wi)} Wine & 178 &13& 3 & Real, numeric& 813 &1680 \\
  \hline
\textbf{(Ba)} Balance & 625& 4& 3 & Positive Integer&1 &5 \\
  \hline
\textbf{(Ha)} Haberman & 306 & 3& 2 & Positive Integer &0 &83 \\
  \hline
\textbf{(W)} Wholesale & 440 & 8& 2 &Positive Integer &1 & 112151\\
  \hline
\textbf{(In)} Indian Liver & 583 & 10 & 2 & Real, Integer & 0 & 297 \\
 \hline
\end{tabular}
}
\caption{Description of UCI datasets used where \#E = number of examples, \#F = number of features, \#C = number of classes}
\label{datasets}
\end{table}

\subsection{KNN Experiments}

The generalized Gini prametric  is implemented with the supervised KNN algorithm using the \texttt{scikit-learn} library. To compare the 12 distances listed above, precision and recall metrics are used on the same UCI datasets described in Table \ref{datasets}. To identify the optimal number of neighbors, $k$ is looped from $1$ to $11$ for each distance. A grid search is then performed to select the optimal hyper-parameter $\nu$ of the generalized Gini prametric that maximizes the precision. Then, recall and precision are computed and a 3-fold cross-validation is performed to validate the results (see Tables 1-10 in Supplementary Materials for experiments with and without noise \citep{mussard2025}).

\medskip

\textbf{Without Noise.} All datasets are treated as a task of classification (for regressions the output is discretized in order to compute the confusion matrix). Table \ref{tab:rankings_precision_knn} provides the precision rankings of the 12 KNN models evaluated across the 16 UCI datasets. The Gini KNN achieves the highest rank on 8 datasets, with an average rank of 3.12 (last column). The second-best KNN model, based on Hassanat distance, has an average rank of 4.06, with top positions on 5 datasets. These KNN models are followed by those relying on Hellinger distance (F4 family) and Gini ($\nu=2$), with 4.62 and 5.12 average ranks, respectively.  

\begin{table}[ht]
\centering
\tiny
\renewcommand{\arraystretch}{1} 
\setlength{\tabcolsep}{2pt} 
\begin{tabular}{|p{0.8in}|c|c|c|c|c|c|c|c|c|c|c|c|c|c|c|c|c|}
\hline
\textbf{Distances $\downarrow$}          & \textbf{Ir} & \textbf{Wi} & \textbf{Bc} & \textbf{S} & \textbf{QS} & \textbf{V} & \textbf{A} & \textbf{Gl} & \textbf{I} & \textbf{B} & \textbf{G} & \textbf{He} & \textbf{Ba} & \textbf{Ha} & \textbf{W} & \textbf{In} & \textbf{Rank} \\ \hline
Gini prametric $\nu^*$ & 1 & 6 & 4 & 1 & 1 & 1 & 4 & 1 & 4 & 1 & 1 & 4 & 3 & 10 & 1 & 7 & \cellcolor{red!40}3.12\\
Gini prametric $\nu=2$ & 4 & 8 & 4 & 2 & 2 & 1 & 5 & 5 & 9 & 1 & 4 & 8 & 7 & 12 & 3 & 7 & \cellcolor{blue!10} 5.12 \\
Euclidean & 4 & 11 & 9 & 10 & 5 & 6 & 11 & 3 & 6 & 1 & 11 & 9 & 10 & 2 & 6 & 2 & 6.62 \\
Manhattan & 10 & 10 & 8 & 6 & 5 & 3 & 9 & 2 & 3 & 1 & 6 & 9 & 6 & 3 & 6 & 2 & 5.56 \\
Minkowski & 4 & 12 & 9 & 10 & 9 & 9 & 11 & 8 & 8 & 1 & 11 & 9 & 7 & 1 & 10 & 6 & 7.81 \\
Cosine & 1 & 9 & 9 & 6 & 9 & 4 & 10 & 6 & 6 & 7 & 10 & 12 & 2 & 3 & 12 & 12 & 7.38 \\
Hassanat & 11 & 3 & 1 & 2 & 9 & 4 & 2 & 3 & 1 & 1 & 1 & 1 & 10 & 3 & 6 & 7 & \cellcolor{blue!30}4.06 \\
Canberra & 11 & 2 & 1 & 9 & 3 & 9 & 1 & 10 & 1 & 10 & 6 & 2 & 10 & 8 & 3 & 7 & 5.81 \\
Hellinger & 4 & 6 & 4 & 2 & 5 & 7 & 5 & 6 & 4 & 8 & 6 & 3 & 3 & 8 & 1 & 2 & \cellcolor{blue!20}4.62 \\
Jensen-Shannon & 1 & 4 & 9 & 2 & 5 & 7 & 5 & 10 & 11 & 12 & 4 & 4 & 1 & 3 & 11 & 1 & 5.62 \\
Pearson Chi2 & 4 & 5 & 4 & 6 & 12 & 12 & 8 & 8 & 12 & 8 & 1 & 4 & 3 & 10 & 3 & 7 & 6.69 \\
Vicis Symmetric & 4 & 1 & 1 & 12 & 3 & 11 & 3 & 12 & 10 & 11 & 6 & 7 & 7 & 3 & 6 & 2 & 6.19 \\
\hline
\end{tabular}
\caption{Ranking of KNN Models by Precision}
\label{tab:rankings_precision_knn}
\end{table}
\bigskip

The same results are obtained in terms of recall (Table \ref{tab:rankings_recall_knn}). The Gini KNN achieves the top rank 9 times, while Hassanat achieves it 5 times. The standard Gini KNN ($\nu=2$) ranks third.
\bigskip

\begin{table}[ht]
\centering
\tiny
\renewcommand{\arraystretch}{1} 
\setlength{\tabcolsep}{2pt} 
\begin{tabular}{|p{0.8in}|c|c|c|c|c|c|c|c|c|c|c|c|c|c|c|c|c|}
\hline
\textbf{Distances $\downarrow$}          & \textbf{Ir} & \textbf{Wi} & \textbf{Bc} & \textbf{S} & \textbf{QS} & \textbf{V} & \textbf{A} & \textbf{Gl} & \textbf{I} & \textbf{B} & \textbf{G} & \textbf{He} & \textbf{Ba} & \textbf{Ha} & \textbf{W} & \textbf{In} & \textbf{Rank} \\ \hline
Gini prametric $\nu^*$& 1 & 7 & 3 & 1 & 1 & 1 & 3 & 1 & 5 & 1 & 1 & 7 & 3 & 6 & 1 & 1 & \cellcolor{red!40}2.69 \\
Gini prametric $\nu=2$ & 2 & 8 & 3 & 2 & 1 & 1 & 7 & 3 & 9 & 1 & 4 & 8 & 3 & 12 & 3 & 1 & \cellcolor{blue!20}4.25 \\
Euclidean & 2 & 11 & 9 & 9 & 3 & 6 & 11 & 2 & 5 & 1 & 11 & 10 & 6 & 6 & 7 & 1 & 6.25 \\
Manhattan & 8 & 10 & 8 & 4 & 3 & 3 & 9 & 3 & 3 & 1 & 10 & 9 & 10 & 1 & 5 & 1 & 5.5 \\
Minkowski & 2 & 12 & 9 & 9 & 9 & 9 & 12 & 11 & 8 & 1 & 11 & 12 & 6 & 1 & 10 & 6 & 8 \\
Cosine & 2 & 9 & 12 & 8 & 11 & 5 & 10 & 5 & 5 & 7 & 8 & 11 & 2 & 1 & 11 & 6 & 7.06 \\
Hassanat & 10 & 3 & 1 & 2 & 9 & 4 & 2 & 7 & 1 & 1 & 2 & 3 & 11 & 1 & 7 & 1 & \cellcolor{blue!30}4.06 \\
Canberra & 10 & 2 & 3 & 9 & 3 & 9 & 1 & 7 & 2 & 10 & 4 & 1 & 12 & 6 & 7 & 6 & 5.75 \\
Hellinger & 2 & 5 & 3 & 4 & 3 & 7 & 6 & 5 & 3 & 8 & 8 & 2 & 6 & 1 & 1 & 6 & \cellcolor{blue!10}4.38 \\
Jensen-Shannon & 8 & 4 & 9 & 4 & 3 & 7 & 5 & 10 & 12 & 12 & 4 & 3 & 1 & 6 & 11 & 12 & 6.94 \\
Pearson Chi2 & 2 & 5 & 3 & 4 & 12 & 12 & 7 & 9 & 11 & 8 & 2 & 3 & 3 & 6 & 3 & 6 & 6 \\
Vicis Symmetric & 10 & 1 & 1 & 12 & 3 & 11 & 3 & 12 & 10 & 11 & 4 & 3 & 6 & 11 & 5 & 11 & 7.12 \\
\hline
\end{tabular}
\caption{Ranking of KNN Models by Recall}
\label{tab:rankings_recall_knn}
\end{table}
\bigskip

\textbf{Wilcoxon test.} To better highlight the performance differences between the KNN methods, Table \ref{tab:wilcoxon_pvalues} shows the Wilcoxon signed-rank test results, assessing whether there are statistically significant differences in recall and precision distributions across the 16 datasets. No statistical difference is observed between the generalized Gini parametric (or Gini) and Hassanat distance. However, a significant statistical difference is found between the generalized Gini parametric and cosine (and at the 10\% significance level for Gini vs. cosine).
\begin{table}[h!]
\centering
\begin{tabular}{lcc}
\hline
Comparison & Precision & Recall \\
\hline
Gini prametric vs. Hassanat             & 0.1944 & 0.4842 \\
Gini prametric vs. Cosine               & \textbf{0.0934} & \textbf{0.0822} \\
Generalized Gini prametric vs. Hassanat & 0.9441 & 0.5393 \\
Generalized Gini prametric vs. Cosine   & \textbf{0.0113} & \textbf{0.0020} \\
\hline
\end{tabular}
\caption{Wilcoxon p-values for Precision and Recall}
\label{tab:wilcoxon_pvalues}
\end{table}

\textbf{With 5\% and 10\% Noise.} 
The experiment involves 5\% and 10\% noise into the data. Specifically, Gaussian noise with a mean of 0 and a standard deviation of 1 is added. This setup is designed to evaluate the robustness of the KNN algorithms under less favorable conditions, where the generalized Gini prametric is expected to outperform the other methods.
In the first case, the Gini KNN achieves the highest average precision (11 top-ranked positions out of 16) and recall (10 top-ranked positions). In the 10\% contamination case, the performances are much more variable. The generalized Gini prametric achieves only 6 best positions for precision and 5 for recall. Then, it comes Hassanat, Minkowski and Cosine (2 top positions for precision), and Hassanat again for recall (3 top positions, see Tables 3-10 in Supplementary Materials \citep{mussard2025}). 

\medskip

\textbf{Time complexity.} The time complexity of the generalized Gini prametric is $\mathcal{O}(nd \log n)$ for a matrix $\X$ of size $n \times d$. To compare the running time of Gini KNN and standard KNN (Minkowski with $p=2$), the Fashion MNIST dataset is used \citep{fashion-mnist}. The training set consists of 49{,}000 images and the test set consists of 21{,}000 images, each of size $28 \times 28$. The Gini KNN ($\nu=2$) achieves a macro F-measure nearly identical to the Minkowski KNN (0.86 vs.\ 0.85 respectively). The running time is approximately 3.5 minutes on an RTX 8000 GPU for both methods. Both distances are precomputed (using the KNN implementation from the \texttt{sklearn} library), and only a single loop is used in the computation of each distance for a fair comparison.

\subsection{K-means Experiments}\label{k_means_exp}

To ensure accurate performance measurement, a 5-fold cross-validation is performed with a fixed seed for each experiment. The number of clusters is set to $k$ i.e. the number of classes in Table~\ref{datasets}. K-means are first initialized with the \texttt{k-means++} algorithm (\citet{arthur2007kmeans}), based on a sampling of the empirical probability distributions of the points (of the entire dataset) that accelerates convergence. All K-means models (issued from the 12 distances) are then initialized with the same optimized centroids. 
For Gini K-means, a grid search is performed over the hyper-parameter $\nu$, selecting the value that maximizes the precision score. Indeed, since the UCI datasets are supervised, computing the silhouette criterion is not required.
After predicting the labels on the test data, the Kuhn-Munkres algorithm \cite{kuhn1955hungarian} is applied to align the predicted clusters with the true labels, as K-means does not assign labels corresponding to the actual classes. Finally, the average precision and recall are computed across all folds. 

\medskip

\textbf{Without Noise.} The generalized Gini prametric achieves the best performance thanks to its hyper-parameter. It is top-ranked on 5 datasets out of 16 (in terms of precision Table \ref{tab:rankings_precision_kmeans}), with a mean rank of 3.5 (the best mean rank of 4.19 for recall Table \ref{tab:rankings_recall_kmeans}). The second best method is the K-means based on the Hassanat distance in terms  of precision and recall (mean rank of 5.06 and 4.75 respectively). The details about precision and recall are relegated in Tables 11-14 of Supplementary Materials \citep{mussard2025}. 

\begin{table}[h!]
\centering
\tiny
\renewcommand{\arraystretch}{1} 
\setlength{\tabcolsep}{2pt} 
\begin{tabular}{|p{0.8in}|c|c|c|c|c|c|c|c|c|c|c|c|c|c|c|c|c|}
\hline
\textbf{Distances $\downarrow$}   &  \textbf{I} & \textbf{B} & \textbf{G} & \textbf{He} & \textbf{Ba} & \textbf{Ha} & \textbf{W} & \textbf{In} &  \textbf{Ir} & \textbf{Wi} & \textbf{Bc} & \textbf{S} & \textbf{QS} & \textbf{V} & \textbf{A} & \textbf{Gl} &  \textbf{Rank} \\ \hline
Gini prametric $\nu^*$ &          4 &        1 &      1 &     1 &       1 &        5 &         6 &                    3 &    4 &    6 &             3 &     1 &    8 &       6 &          2 &     4 &         \cellcolor{red!40}3.50 \\
Gini prametric $\nu=2$            &          7 &        8 &      4 &     5 &       4 &        7 &        12 &                    6 &    5 &    8 &             4 &     2 &   12 &       7 &          3 &     8 &         6.38 \\
Euclidean        &         10 &        4 &      5 &     8 &       5 &       11 &         3 &                   10 &    8 &   11 &             8 &     7 &   10 &      10 &         10 &    10 &         8.12 \\
Manhattan        &          8 &        3 &      5 &    10 &       3 &        9 &         1 &                   10 &    6 &    9 &             9 &     5 &    9 &      11 &         10 &     6 &         7.12 \\
Minkowski        &          9 &        2 &      5 &     7 &       6 &       10 &         5 &                   10 &    7 &   10 &             7 &     6 &   11 &       9 &         10 &    11 &         7.81 \\
Cosine           &          6 &        6 &     10 &     3 &       2 &        8 &         2 &                    8 &   12 &    7 &             5 &     8 &    6 &      12 &          5 &     7 &         6.69 \\
Canberra         &          2 &        7 &      3 &     2 &       9 &        1 &        11 &                    5 &   11 &    2 &             2 &    10 &    4 &       8 &          9 &     5 &         5.69 \\
Hellinger        &          1 &        5 &      8 &     9 &       8 &        6 &         9 &                    1 &    2 &    4 &            11 &    12 &    1 &       2 &          7 &     2 &         \cellcolor{blue!20}5.5 \\
Jensen-Shannon   &         11 &       12 &      9 &    12 &      10 &       12 &        10 &                    1 &    1 &   12 &            12 &     9 &    2 &       3 &          4 &    12 &         8.25 \\
Pearson Chi2      &         11 &       11 &     12 &     6 &      11 &        3 &         4 &                    7 &    3 &    1 &             6 &     3 &    5 &       4 &          6 &     1 &         \cellcolor{blue!10}5.88 \\
Vicis Symmetric  &          3 &       10 &     11 &    11 &      12 &        2 &         8 &                    9 &    9 &    5 &            10 &    11 &    3 &       1 &          8 &     3 &         7.25 \\
Hassanat         &          5 &        9 &      2 &     4 &       7 &        4 &         7 &                    4 &   10 &    2 &             1 &     4 &    7 &       5 &          1 &     9 &         \cellcolor{blue!30}5.06 \\
\hline
\end{tabular}
\caption{Ranking of K-means Models by Precision}
\label{tab:rankings_precision_kmeans}
\end{table}
\bigskip

\begin{table}[htbp!]
\centering
\tiny
\renewcommand{\arraystretch}{1} 
\setlength{\tabcolsep}{2pt} 
\begin{tabular}{|p{0.8in}|c|c|c|c|c|c|c|c|c|c|c|c|c|c|c|c|c|}
\hline
\textbf{Distances $\downarrow$}   &  \textbf{I} & \textbf{B} & \textbf{G} & \textbf{He} & \textbf{Ba} & \textbf{Ha} & \textbf{W} & \textbf{In} &  \textbf{Ir} & \textbf{Wi} & \textbf{Bc} & \textbf{S} & \textbf{QS} & \textbf{V} & \textbf{A} & \textbf{Gl} &  \textbf{Rank} \\ \hline
Gini prametric $\nu^*$ &          4 &        1 &      2 &     2 &       2 &        9 &         7 &                    7 &    4 &    6 &             2 &     1 &    6 &       6 &          4 &     4 &         \cellcolor{red!40}4.19 \\
Gini prametric $\nu=2$             &          9 &        8 &      3 &     3 &       3 &        7 &         7 &                    6 &    4 &    9 &             4 &     3 &    8 &       7 &          5 &     6 &         5.75 \\
Euclidean        &         12 &        4 &      5 &     5 &       4 &       10 &        10 &                   10 &    7 &   11 &             8 &     8 &   11 &       9 &          9 &    10 &         8.31 \\
Manhattan        &         10 &        2 &      5 &     8 &       1 &        8 &        12 &                   10 &    4 &   10 &             9 &     5 &    9 &      10 &          9 &     8 &         7.5 \\
Minkowski        &         11 &        2 &      5 &     7 &       6 &       11 &         6 &                   10 &    7 &   11 &             7 &     7 &   12 &       8 &          9 &    10 &         8.06 \\
Cosine           &          6 &        5 &     11 &     4 &       5 &        6 &         2 &                    4 &   12 &    8 &             5 &     9 &    4 &      12 &          2 &     7 &         6.38 \\
Canberra         &          3 &        6 &      4 &     9 &      10 &        3 &        11 &                    2 &   10 &    3 &             1 &    11 &    2 &      10 &         12 &     5 &         6.38 \\
Hellinger        &          2 &        7 &     12 &    10 &       9 &        4 &         4 &                    9 &    2 &    4 &            11 &    12 &    3 &       4 &          7 &     1 &         \cellcolor{blue!10}6.31 \\
Jensen-Shannon   &          7 &        9 &      8 &    12 &      12 &       12 &         5 &                    8 &    1 &    7 &            12 &     4 &    1 &       1 &          3 &    12 &         7.12 \\
Pearson Chi2      &          7 &       12 &     10 &     6 &      11 &        2 &         1 &                    3 &    3 &    1 &             6 &     2 &    5 &       3 &          6 &     2 &         \cellcolor{blue!20}5 \\
Vicis Symmetric  &          1 &       11 &      9 &    11 &       8 &        5 &         9 &                    1 &   11 &    5 &            10 &    10 &    7 &       2 &          8 &     3 &         6.94 \\
Hassanat         &          5 &       10 &      1 &     1 &       7 &        1 &         3 &                    5 &    9 &    2 &             2 &     6 &   10 &       5 &          1 &     8 &         \cellcolor{blue!30}4.75 \\\hline
\end{tabular}
\caption{Ranking of K-means Models by Recall}
\label{tab:rankings_recall_kmeans}
\end{table}

\textbf{With 5\% and 10\% Noise.}
The same experiments are conducted for different levels of noise (5\% and 10\%) as in the KNN experiments, under the same conditions. 
The Gini K-means reaches the top position 3 times (out of 16 datasets) with a mean ranking of 3.75 (5\% noise). The Hellinger distance is ranked 2nd, with more top-ranked positions (5 out of 16 datasets), but with a larger variability so that the mean rank is only 5 (like Hassanat). In the case of 10\% noise, the generalized Gini prametric has only 2 top-ranked positions and a mean rank of 4.19. Again, Hellinger has much more top-ranked positions (5 out of 16) but a mean rank of 5.06, exhibiting a more important variability in the ranking (sometimes ranked 10 or 11). The results are presented in Supplementary Materials (Tables 15-22 \citep{mussard2025}).

\medskip

\textbf{Strategy to select $\nu$ for real applications.} The performances described above have been computed with known labels for the test points. In real applications, once $k$ is fixed, a strategy is necessary to select the hyper-parameter. 
As described in Algorithm \ref{MC}, a 5-fold cross-validation with a fixed seed is used to select the hyper-parameter $\nu^*$ that maximizes the silhouette score (mean across all folds). Tables \ref{tab:rankings_precision_silhouette} and \ref{tab:rankings_recall_silhouette} indicate that the generalized Gini prametric is ranked 4th in precision and 3rd in recall (for $\nu=2$). This experiment shows that maximizing the silhouette score is not the best method, but it allows the generalized Gini prametric to be ranked among the top 4 distances (Tables 23-26 in Supplementary Materials \citep{mussard2025}). 

\begin{table}[htbp!]
\centering
\tiny
\renewcommand{\arraystretch}{1} 
\setlength{\tabcolsep}{2pt} 
\begin{tabular}{|p{0.8in}|c|c|c|c|c|c|c|c|c|c|c|c|c|c|c|c|c|}
\hline
\textbf{Distances}   &  \textbf{I} & \textbf{B} & \textbf{G} & \textbf{He} & \textbf{Ba} & \textbf{Ha} & \textbf{W} & \textbf{In} &  \textbf{Ir} & \textbf{Wi} & \textbf{Bc} & \textbf{S} & \textbf{QS} & \textbf{V} & \textbf{A} & \textbf{Gl} &  \textbf{Rank} \\ \hline
Gini prametric $\nu^*$ &          8 &        1 &      4 &     2 &       6 &        6 &         9 &                    3 &    5 &    8 &             6 &     6 &   11 &       8 &          2 &     5 &         \cellcolor{blue!10}5.62 \\
Gini prametric $\nu=2$               &          6 &        8 &      3 &     5 &       3 &        7 &        12 &                    6 &    4 &    7 &             3 &     1 &   12 &       6 &          3 &     8 &         5.88 \\
Euclidean          &         10 &        4 &      5 &     8 &       4 &       11 &         3 &                   10 &    8 &   11 &             8 &     7 &    9 &      10 &         10 &    10 &         8 \\
Manhattan          &          7 &        3 &      5 &    10 &       2 &        9 &         1 &                   10 &    6 &    9 &             9 &     4 &    8 &      11 &         10 &     6 &         6.88 \\
Minkowski          &          9 &        2 &      5 &     7 &       5 &       10 &         5 &                   10 &    7 &   10 &             7 &     5 &   10 &       9 &         10 &    11 &         7.62 \\
Cosine             &          5 &        6 &     10 &     3 &       1 &        8 &         2 &                    8 &   12 &    6 &             4 &     8 &    6 &      12 &          5 &     7 &         6.44 \\
Canberra           &          2 &        7 &      2 &     1 &       9 &        1 &        11 &                    5 &   11 &    2 &             2 &    10 &    4 &       7 &          9 &     4 &          \cellcolor{blue!20}5.44 \\
Hellinger          &          1 &        5 &      8 &     9 &       8 &        5 &         8 &                    1 &    2 &    4 &            11 &    12 &    1 &       2 &          7 &     2 &         \cellcolor{blue!30}5.38 \\
Jensen-Shannon     &         11 &       12 &      9 &    12 &      10 &       12 &        10 &                    1 &    1 &   12 &            12 &     9 &    2 &       3 &          4 &    12 &         8.25 \\
PearsonChi2        &         11 &       11 &     12 &     6 &      11 &        3 &         4 &                    7 &    3 &    1 &             5 &     2 &    5 &       4 &          6 &     1 &         5.75 \\
VicisSymmetric1    &          3 &       10 &     11 &    11 &      12 &        2 &         7 &                    9 &    9 &    5 &            10 &    11 &    3 &       1 &          8 &     3 &         7.19 \\
Hassanat           &          4 &        9 &      1 &     4 &       7 &        4 &         6 &                    4 &   10 &    2 &             1 &     3 &    7 &       5 &          1 &     9 &         \cellcolor{red!40}4.81 \\\hline
\end{tabular}

\caption{Ranking of K-means Models by Precision with Silhouette}
\label{tab:rankings_precision_silhouette}
\end{table}

\bigskip

\begin{table}[htbp!]
\centering
\tiny
\renewcommand{\arraystretch}{1} 
\setlength{\tabcolsep}{2pt} 
\begin{tabular}{|p{0.8in}|c|c|c|c|c|c|c|c|c|c|c|c|c|c|c|c|c|}
\hline
\textbf{Distances}   &  \textbf{I} & \textbf{B} & \textbf{G} & \textbf{He} & \textbf{Ba} & \textbf{Ha} & \textbf{W} & \textbf{In} &  \textbf{Ir} & \textbf{Wi} & \textbf{Bc} & \textbf{S} & \textbf{QS} & \textbf{V} & \textbf{A} & \textbf{Gl} &  \textbf{Rank} \\ \hline
Gini prametric $\nu^*$ &         10 &        1 &      2 &     3 &       3 &        8 &        12 &                    8 &    4 &    9 &             6 &     6 &    7 &      11 &          6 &     5 &         6.31 \\
Gini prametric $\nu=2$              &          8 &        8 &      3 &     2 &       2 &        7 &         7 &                    6 &    4 &    8 &             3 &     2 &    8 &       6 &          4 &     6 &         \cellcolor{blue!20}5.25 \\
Euclidean          &         12 &        4 &      5 &     5 &       4 &       10 &         9 &                   10 &    7 &   11 &             8 &     8 &   11 &       8 &          9 &    10 &         8.19 \\
Manhattan          &          9 &        2 &      5 &     8 &       1 &        8 &        11 &                   10 &    4 &   10 &             9 &     4 &    9 &       9 &          9 &     8 &         7.25 \\
Minkowski          &         11 &        2 &      5 &     7 &       6 &       11 &         6 &                   10 &    7 &   11 &             7 &     7 &   12 &       7 &          9 &    10 &         8 \\
Cosine             &          5 &        5 &     11 &     4 &       5 &        6 &         2 &                    4 &   12 &    7 &             4 &     9 &    4 &      11 &          2 &     7 &         \cellcolor{blue!10}6.12 \\
Canberra           &          3 &        6 &      4 &     9 &      10 &        3 &        10 &                    2 &   10 &    3 &             1 &    11 &    2 &       9 &         12 &     4 &         6.19 \\
Hellinger          &          2 &        7 &     12 &    10 &       9 &        4 &         4 &                    9 &    2 &    4 &            11 &    12 &    3 &       4 &          7 &     1 &         6.31 \\
Jensen-Shannon     &          6 &        9 &      8 &    12 &      12 &       12 &         5 &                    7 &    1 &    6 &            12 &     3 &    1 &       1 &          3 &    12 &         6.88 \\
PearsonChi2        &          6 &       12 &     10 &     6 &      11 &        2 &         1 &                    3 &    3 &    1 &             5 &     1 &    5 &       3 &          5 &     2 &          \cellcolor{blue!30}4.75 \\
VicisSymmetric1    &          1 &       11 &      9 &    11 &       8 &        5 &         8 &                    1 &   11 &    5 &            10 &    10 &    6 &       2 &          8 &     3 &         6.81 \\
Hassanat           &          4 &       10 &      1 &     1 &       7 &        1 &         3 &                    5 &    9 &    2 &             2 &     5 &   10 &       5 &          1 &     8 &         \cellcolor{red!40}4.62 \\\hline
\end{tabular}
\caption{Ranking of K-means Models by Recall with Silhouette}
\label{tab:rankings_recall_silhouette}
\end{table}

\textbf{Convergence.} The performance of Gini K-means is evaluated based on the number of iterations required for convergence. While the grid search used to obtain the optimal parameter ($\nu^*$) is computationally efficient, it is essential to analyze the convergence time, which is reported in terms of iteration counts. Compared to other methods, the optimized generalized Gini prametric (with $\nu^*$) exhibits an average of 6.5 iterations over the 16 datasets (and over the 5 folds), placing the method among the top five in terms of efficiency. 

\begin{table}[h!]
\centering
\scriptsize
\renewcommand{\arraystretch}{1}
\setlength{\tabcolsep}{2.5pt}
\begin{tabular}{|p{0.7in}|rrrrrrrrr}
\toprule
\textbf{Distances} &  Ir &  Wi &  Bc &  S &  A &  He &  Gl &  Ba &  G \\
\midrule
Gini pr. ($\nu=2$)               &   2.4 &   8.8 &    7.6 &   6.6 &   7.4 &  16.6 &   4.8 &   9.2 &   4.8 \\
Gini pr. ($\nu^*$) &   2.0 &  11.4 &    2.0 &   6.2 &   6.8 &   8.6 &   4.4 &   9.2 &   4.8 \\
Euclidean                   &   2.2 &   4.4 &    2.8 &   3.4 &   1.8 &   3.0 &   3.2 &  19.4 &  12.2 \\
Manhattan                   &   2.0 &   7.8 &    2.6 &   4.0 &   1.8 &   3.2 &   3.0 &  12.6 &  12.8 \\
Minkowski                   &   2.8 &   4.2 &    2.8 &   5.0 &   1.8 &   5.2 &   2.8 &  17.2 &  12.2 \\
Cosine                      &   1.6 &   1.0 &    1.0 &   2.6 &   1.0 &   1.0 &   2.8 &   3.2 &   1.0 \\
Hassanat                    &   2.6 &   6.2 &    4.0 &   6.0 &   1.0 & 165.8 &   5.2 &  22.0 &   9.2 \\
Canberra                    &   3.6 &   6.6 &    3.2 &   8.8 & 300.0 &  21.6 &   9.6 &  83.8 &  12.0 \\
Hellinger                   & 300.0 & 300.0 &  300.0 & 300.0 &   2.2 & 300.0 & 300.0 & 300.0 & 300.0 \\
Jensen\_Shannon              & 240.4 &   2.0 &    2.0 &   2.0 &   2.0 &   2.0 &   2.0 &   2.0 &   2.0 \\
Pearson Chi2                & 300.0 & 300.0 &    1.8 & 300.0 &   6.8 & 300.0 & 300.0 & 300.0 & 300.0 \\
Vicis Symmetric          & 300.0 & 300.0 &  240.8 & 300.0 & 300.0 & 300.0 & 300.0 & 300.0 & 300.0 \\
\bottomrule
\end{tabular}%
\caption{Steps to convergence of K-means: mean over the 5 folds (Datasets 1–9)}
\end{table}

\begin{table}[h!]
\centering
\scriptsize
\renewcommand{\arraystretch}{1}
\setlength{\tabcolsep}{2pt}
\begin{tabular}{|p{1in}|rrrrrrr|r}
\toprule
\textbf{Distances} &  Ha &  W &  V &  B &  In &  I &  QS & \cellcolor{gray!30} \textbf{Mean}  \\
\midrule
Gini prametric ($\nu=2$)                       &   3.8 &  18.4 &  12.4 &   8.4 &   7.4 &   4.0 &   5.2 &   8.0 \\
Gini prametric ($\nu^*$) &   5.0 &  14.2 &  10.8 &   5.4 &   6.0 &   2.4 &   4.6 &   \cellcolor{blue!10}6.5 \\
Euclidean                   &   5.6 &   4.8 &   3.4 &   2.0 &   2.2 &   2.4 &   3.6 &   \cellcolor{blue!30}4.8 \\
Manhattan                   &   5.0 &   7.4 &   4.2 &   4.0 &   2.2 &   3.2 &   7.2 &   \cellcolor{blue!20}5.2 \\
Minkowski                   &   5.2 &   5.0 &   4.4 &   2.0 &   2.2 &   2.2 &   2.4 &   \cellcolor{blue!30}4.8 \\
Cosine                      &   1.0 &   1.0 &   1.0 &   1.0 &   1.0 &   1.4 &   1.0 &   \cellcolor{red!40}1.4 \\
Hassanat                    &  19.0 &   1.0 &  18.2 &   6.0 &   4.0 &   5.4 &  11.4 &  17.9 \\
Canberra                    &  11.4 &   1.0 &  14.4 &   4.8 &   4.4 &   5.0 &  12.8 &  31.4 \\
Hellinger                   & 300.0 &   3.8 & 300.0 & 300.0 &   3.0 & 300.0 & 300.0 & 244.3 \\
Jensen\_Shannon              &   2.0 &  62.6 &   2.0 &   2.0 &   2.0 &   2.0 &   2.0 &  20.7 \\
Pearson Chi2                & 243.8 & 257.4 & 300.0 & 264.6 &  86.2 &   2.0 & 300.0 & 222.7 \\
Vicis Symmetric          & 300.0 &   2.2 & 300.0 & 300.0 & 300.0 & 300.0 & 300.0 & 277.7 \\
\bottomrule
\end{tabular}%
\caption{Steps to convergence of K-means: mean over the 5 folds (Datasets 10–16)}
\end{table}

\subsection{Illustration on Agglomerative Hierarchical Clustering}\label{sect:agglo}


For this additional experiment, noise has been directly added to the data using a standard normal distribution with a noise level of 10\% to evaluate which distance metric demonstrates greater robustness under noisy conditions. A 3-fold cross-validation has been made and the linkage criterion has been fixed to `average', i.e., the average distance between all points between clusters is used for all methods. The generalized Gini prametric is top-ranked with 7 top positions out of 16 datasets, and a mean rank of 1.94 (for precision).

\begin{table}[h!]
\centering
\tiny
\renewcommand{\arraystretch}{1} 
\setlength{\tabcolsep}{2pt} 
\begin{tabular}{|p{0.8in}|c|c|c|c|c|c|c|c|c|c|c|c|c|c|c|c|c|}
\hline
\textbf{Distances}   &  \textbf{I} & \textbf{B} & \textbf{G} & \textbf{He} & \textbf{Ba} & \textbf{Ha} & \textbf{W} & \textbf{In} &  \textbf{Ir} & \textbf{Wi} & \textbf{Bc} & \textbf{S} & \textbf{QS} & \textbf{V} & \textbf{A} & \textbf{Gl} &  \textbf{Rank} \\ \hline
Gini prametric $(\nu^*)$ &           7 &         1 &       2 &      2 &        2 &         1 &          1 &                     2 &     1 &     2 &              1 &      1 &           2 &      2 &     1 &        3 &           \cellcolor{red!40}1.94 \\
Gini prametric $(\nu=2)$            &           7 &        12 &       3 &      5 &       10 &         4 &         13 &                     2 &     3 &     6 &              3 &      6 &           5 &      3 &     7 &        9 &           6.12 \\
Euclidean        &           7 &         3 &      10 &      9 &       11 &         4 &          7 &                     2 &     4 &     9 &              6 &      3 &          10 &      5 &     5 &        2 &           6.06 \\
Euclidean Ward   &           2 &         6 &       1 &     13 &        1 &         3 &          2 &                     1 &     7 &     1 &              2 &      9 &           1 &      1 &     4 &        4 &           \cellcolor{blue!30}3.62 \\
Manhattan        &           6 &         2 &       4 &      9 &        8 &         4 &          7 &                     2 &     2 &     7 &              6 &      4 &          10 &     11 &     3 &        1 &           \cellcolor{blue!20}5.38 \\
Minkowski        &           7 &         4 &       5 &      9 &       13 &         4 &          7 &                     2 &     9 &     7 &              6 &     10 &          10 &      7 &     2 &        5 &           6.69 \\
Cosine           &           7 &         7 &       6 &      8 &        9 &         4 &          3 &                     2 &    10 &    11 &              5 &     10 &           9 &      8 &     8 &        8 &           7.19 \\
Canberra         &           5 &         8 &      12 &      3 &        3 &         4 &          7 &                     2 &    11 &    10 &             11 &      2 &          13 &     12 &     8 &        6 &           7.31 \\
Hellinger        &           7 &         5 &       7 &      7 &        7 &         4 &          5 &                     2 &     5 &     5 &              9 &     10 &           3 &      9 &     6 &       10 &           6.31 \\
Jensen\_Shannon    &           7 &        13 &      12 &      6 &        4 &         4 &          4 &                     2 &    11 &    13 &             11 &     10 &           7 &     13 &     8 &       13 &           8.62 \\
Pearson Chi2      &           3 &        10 &       8 &      9 &       12 &         2 &          6 &                     2 &     5 &     3 &             10 &      7 &           3 &      9 &     8 &       11 &           6.75 \\
Vicis Symmetric  &           4 &         9 &      11 &      4 &        6 &         4 &          7 &                     2 &    11 &    12 &             11 &      5 &           8 &      6 &     8 &       12 &           7.5 \\
Hassanat         &           1 &        11 &       9 &      1 &        5 &         4 &          7 &                     2 &     8 &     4 &              4 &      8 &           6 &      4 &     8 &        7 &           \cellcolor{blue!10}5.56 \\
\hline
\end{tabular}
\caption{Ranking of the Models by Precision: Noise 10\%}
\label{tab:rankings_precision_agglo_10}
\end{table}

\begin{table}[h!]
\centering
\tiny
\renewcommand{\arraystretch}{1} 
\setlength{\tabcolsep}{2pt} 
\begin{tabular}{|p{0.8in}|c|c|c|c|c|c|c|c|c|c|c|c|c|c|c|c|c|}
\hline
\textbf{Distances}   &  \textbf{I} & \textbf{B} & \textbf{G} & \textbf{He} & \textbf{Ba} & \textbf{Ha} & \textbf{W} & \textbf{In} &  \textbf{Ir} & \textbf{Wi} & \textbf{Bc} & \textbf{S} & \textbf{QS} & \textbf{V} & \textbf{A} & \textbf{Gl} &  \textbf{Rank} \\ \hline
Gini prametric $(\nu^*)$ &           7 &         1 &       2 &      3 &        8 &         3 &          1 &                     2 &     1 &     3 &              1 &      3 &           4 &      9 &     3 &        1 &           \cellcolor{blue!40}3.25 \\
Gini prametric $(\nu=2)$           &           7 &        12 &       4 &     13 &        1 &         3 &         13 &                     2 &     3 &     5 &              4 &      9 &           5 &      4 &    13 &       10 &           6.75 \\
Euclidean        &           7 &         4 &       6 &      8 &       11 &         3 &          7 &                     2 &     5 &    10 &              6 &      9 &          10 &     11 &     3 &        5 &           6.69 \\
Euclidean Ward   &           1 &         6 &       1 &      4 &        5 &         1 &          2 &                     1 &     2 &     1 &              2 &      1 &           1 &      6 &     6 &        2 &           \cellcolor{red!30}2.62 \\
Manhattan        &           6 &         2 &       3 &      8 &        3 &         3 &          7 &                     2 &     4 &     8 &              6 &     11 &          10 &     11 &     5 &        4 &           5.81 \\
Minkowski        &           7 &         3 &       5 &      8 &       13 &         3 &          7 &                     2 &    10 &     8 &              6 &      3 &          10 &      5 &     2 &        3 &           5.94 \\
Cosine           &           7 &         7 &       9 &      5 &        6 &         3 &          3 &                     2 &     7 &    11 &              5 &      3 &           9 &      7 &     7 &        9 &           6.25 \\
Canberra         &           3 &        10 &       9 &      2 &        4 &         3 &          7 &                     2 &    11 &     6 &             11 &      2 &          13 &     10 &     7 &       12 &           7 \\
Hellinger        &           7 &         5 &       9 &      5 &        2 &         3 &          5 &                     2 &     7 &     7 &              9 &      3 &           7 &      2 &     1 &        7 &           \cellcolor{blue!20}5.06 \\
Jensen\_Shannon    &           7 &        11 &       9 &      5 &        9 &         3 &          4 &                     2 &    11 &    12 &             11 &      3 &           2 &     13 &     7 &        8 &           7.31 \\
Pearson Chi2      &           5 &        13 &      13 &      8 &       12 &         2 &          6 &                     2 &     7 &     4 &             10 &     13 &           7 &      2 &     7 &        6 &           7.31 \\
Vicis Symmetric  &           2 &         9 &       8 &      8 &        7 &         3 &          7 &                     2 &    11 &    13 &             11 &     11 &           3 &      8 &     7 &       11 &           7.56 \\
Hassanat         &           3 &         8 &       7 &      1 &       10 &         3 &          7 &                     2 &     6 &     2 &              3 &      8 &           6 &      1 &     7 &       13 &           \cellcolor{blue!10}5.44 \\
\hline
\end{tabular}
\caption{Ranking of the Models by Recall: Noise 10\%}
\label{tab:rankings_recall_agglo10}
\end{table}

In this experiment, Ward's method has been added. It minimizes the within-cluster variance at each step of the clustering process. To be precise, it aims to minimize the increase in the sum of squared within-cluster distances (within-cluster inertia) when two clusters are merged. As can be seen in Table \ref{tab:rankings_recall_agglo10}, Ward's technique achieves the highest recall, ranking in the top position 7 times with an average rank of 2.62. The generalized Gini prametric ranks second (see also Tables 27-30 in Supplementary Materials \citep{mussard2025}).     

\section{Conclusion}\label{section:conclusion}

The generalized Gini prametric, while not being a true distance, satisfies two invariance properties, making it robust to outliers. In particular, it is linearly invariant (similar to the Euclidean distance) and rank invariant. Accordingly, monotonic increasing transformations preserve ranks, ensuring that the Gini prametric exhibits small variations. Consequently, when the data contain outliers or measurement errors, the generalized Gini prametric helps achieve robust results.

The experiments on the UCI datasets highlight the limitations of the generalized Gini prametric in unsupervised tasks. In particular, a strategy for selecting the hyper-parameter, such as using the silhouette score, must be chosen without prior knowledge of the quality of the results. This remains an open problem.


\newpage

\bibliography{mybibfile}

\pagebreak
\onecolumn

\begin{center}
    \Huge Supplementary Materials
\end{center}

\setcounter{table}{0}

\section*{Proof of Proposition 4}

\noindent Let $\textbf{c}_{k}^{(t-1)}$ be the centroid of group $k$ at iteration $(t-1)$. If point $i$ goes from cluster $ C_k^{(t-1)}$ at iteration $(t-1)$ to $ C_{k}^{(t)}$ at iteration $(t)$ then: 
\[
d_{G,\nu}\big(\x_i, \textbf{c}_{k}^{(t)}\big)^2 \leq d_{G,\nu}\big(\x_i, \textbf{c}_{k}^{(t-1)}\big)^2 
\]
The last inequality comes from the algorithm itself. Indeed, if $i \in C_k^{(t)}$, this is because at iteration $t-1$ we have $k = \arg\underset{ 1 \leq j \leq K}{\mathrm{min}} d_{G,\nu}(\x_i, \textbf{c}_{j}^{(t-1)})$. Now, we must prove that taking the sum over $i$ on the last expression implies:
\begin{align}
        \sum_{i \in C_k^{(t)}} d_{G,\nu}\big(\x_i, \textbf{c}_k^{(t)}\big)^2 &= \sum_{i \in C_k^{(t)}}  \Big[-\sum_{j=1}^{d} \big(x_{ij} - c_{jk}^{(t)}\big)\big( \overline{R}_X(x_{ij})^{\nu-1} - \overline{R}_X(c_{jk}^{(t)})^{\nu-1}\big)\Big]^2 \notag \\
        &\leq  \sum_{i \in C_k^{(t)}} \Big[ -\sum_{j=1}^{d} \big(x_{ij} - c_{jk}^{(t-1)}\big)\big(\overline{R}_X(x_{ij})^{\nu-1}-\overline{R}_X(c_{jk}^{(t-1)})^{\nu-1}\big)\Big]^2 \label{proof-convergence}  \\
        &= \sum_{i \in C_k^{(t)}} d_{G,\nu}\big(\x_i, \textbf{c}_k^{(t-1)}\big)^2\notag
\end{align}
Let us prove, for all points $i$ in a given cluster $C$, that the mean $\bar{\mathbf z} \in \mathbb R^d$ is the unique point in $\mathbb R^d$ implying a minimal distance, \textit{i.e.},
\[
\overline{\mathbf{z}} = \underset{\mathbf{z}}{\mathrm{\arg\min}} \sum_{i \in C} \Big[-\sum_{j=1}^{d} (z_{ij} - z_j)(\overline{R}_Z(z_{ij})^{\nu-1}-\overline{R}_Z(z_j)^{\nu-1}) \Big]^2 \equiv  \underset{\textbf{z}}{\mathrm{\arg\min}} f({\mathbf z}) 
\]
with $\mathbf{z} = (z_1,\cdots,z_{j},\cdots,z_d)$. 
Taking the derivative of $f$ with respect to $\mathbf{z}$, and imposing that for small variations of $z_{j}$ ranks remain constant, yields 
\[
f'(\textbf{z}) = 0 \ \Longrightarrow \ 2\sum_{i \in C} \sum_{j=1}^{d} \big(z_{ij} - z_j\big)\big(\overline{R}_Z(z_{ij})^{\nu-1}-\overline{R}_Z(z_j)^{\nu-1}\big) = 0
\]
Using the scalar product this is equivalent to:
\[
2\sum_{i \in C}  \big(\textbf{z}_{i} - \textbf{z}\big) \cdot \big(\overline{R}_Z(\textbf{z}_{i})^{\nu-1}-\overline{R}_Z(\textbf{z})^{\nu-1}\big) = 0
\]
Thus,
\[
\sum_{i \in C}  \textbf{z}_{i} \cdot  \big(\overline{R}_Z(\textbf{z}_{i})^{\nu-1}-\overline{R}_Z(\textbf{z})^{\nu-1}\big) = \sum_{i\in C} \textbf{z} \cdot  \big(\overline{R}_Z(\textbf{z}_{i})^{\nu-1}-\overline{R}_Z(\textbf{z})^{\nu-1}\big) 
\]
This implies,
\[
\textbf{z} = \frac{1}{n} \sum_{i \in C} \textbf{z}_{i} = \bar{\textbf{z}}
\]
Consequently, the unique centroid $\mathbf c$ that minimizes the distance with all points $i$ in cluster $C$ is the arithmetic mean of points $i$. Therefore, 
Eq.\eqref{proof-convergence} is always true if $\overline{R}_Z(\textbf{z}_{i})^{\nu-1}-\overline{R}_Z(\bar{\textbf{z}})^{\nu-1} \neq \mathbf{0}$.

\break

\section*{Experiments}

In what follows, the results of the Minkowski distance are given for $p=3$.

\subsection*{KNN experiments}

\subsubsection*{Without noise}

\begin{table*}[!htbp]
\centering
\caption{Precision (\#P), Recall (\#R), Best $k$ that maximizes the F1-score and $\nu$ parameter on KNN algorithm for selected UCI datasets.}
\vspace{3mm}
\makebox[\textwidth][c]{
\resizebox{\textwidth}{!}{

\label{tab:rankings_recall_agglo_10}
\end{table}

\end{document}